\documentclass{article}

\usepackage{microtype}
\usepackage{graphicx}
\usepackage{subcaption}
\usepackage{booktabs}

\usepackage{hyperref}

\usepackage[preprint]{icml2026}

\usepackage{amsmath}
\usepackage{amssymb}
\usepackage{mathtools}
\usepackage{amsthm}

\usepackage[capitalize,noabbrev]{cleveref}

\usepackage{multirow}
\usepackage{colortbl}
\usepackage{xcolor}
\usepackage{makecell}

\theoremstyle{plain}
\newtheorem{theorem}{Theorem}[section]

\newtheorem{lemma}[theorem]{Lemma}

\theoremstyle{definition}
\newtheorem{definition}[theorem]{Definition}

\theoremstyle{remark}
\newtheorem{remark}[theorem]{Remark}

\icmltitlerunning{Selecting Samples on Graphs: A Unified Dataset Pruning Framework for Lossless Training Acceleration}

\begin{document}

\twocolumn[

  \icmltitle{Selecting Samples on Graphs: A Unified Dataset Pruning Framework for Lossless Training Acceleration}

  \icmlsetsymbol{equal}{*}

  \begin{icmlauthorlist}
    \icmlauthor{Dongyue Wu}{hust,comp}
    \icmlauthor{Zilin Guo}{hust}
    \icmlauthor{Xiaoyu Li}{comp}
    \icmlauthor{Jiajia Liu}{comp}
    \icmlauthor{Jingdong Chen}{comp}
    \icmlauthor{Nong Sang}{hust}
    \icmlauthor{Changxin Gao}{hust}

  \end{icmlauthorlist}

  \icmlaffiliation{hust}{State Key Laboratory of Multispectral Information Intelligent Processing Technology, School of Artificial Intelligence and Automation, Huazhong University of Science and Technology, Wuhan, China}
  \icmlaffiliation{comp}{Ant Group, Hangzhou, China}

  \icmlcorrespondingauthor{Changxin Gao}{cgao@hust.edu.cn}

  \icmlkeywords{Sample Importance, Data Pruning, Machine Learning}

  \vskip 0.3in
]

\printAffiliationsAndNotice{}

\begin{abstract}
    The rapid growth of modern training datasets has significantly increased computational cost, motivating dataset pruning~(DP) methods which retain only a subset of informative samples to reduce training cost.
    Existing pruning criteria typically rely on either intrinsic signals that assess samples independently or extrinsic signals that promote diversity via pairwise relations.
    While effective in their own specific regimes, each captures only one aspect of sample utility and lacks robustness across different pruning ratios or data distribution.
    In this work, we present a unified graph-based DP framework.
    By modeling the dataset as a weighted graph, where node weights encode intrinsic value and edge weights encode extrinsic value, DP can be cast as a Maximum Weight Clique Problem (MWCP).
    Although MWCP is NP-hard, its structure admits a principled greedy solution based on sample-wise marginal gains.
    Under a few mild conditions, we further prove that this unified objective enjoys a formal approximation guarantee, which applies to a broad family of importance metrics and provides practical design guidelines.
    Extensive experiments show that our method outperforms existing DP methods while substantially reducing training cost, reducing training time by over 40\% without sacrificing accuracy on ImageNet-1k with ResNet-50.

\end{abstract}
\section{Introduction}
\label{sec:intro}

Deep learning on large datasets has driven breakthroughs across language, vision, and multimodal reasoning, fueling the rise of LLMs~\cite{achiam2023gpt}, CLIP-based vision–language models~\cite{clip}, diffusion generators~\cite{rombach2022high}, and powerful  downstream models such as SAM~\cite{kirillov2023segment}. Yet the benefits of numerous data come at substantial computational cost: training modern models often requires millions to billions of examples and may span weeks or months, even on large GPU clusters. As datasets continue to scale continuously, the data efficiency of training has emerged as the bottleneck.

A growing line of work seeks to reduce dataset size without degrading downstream performance. These approaches fall into two families: dataset distillation, which synthesizes a compact set of virtual samples, and dataset pruning, which selects an informative subset of real data. Dataset distillation\cite{zhao2023dataset,cazenavette2022dataset,wang2022cafe,nguyen2020dataset,zhaodataset} can be powerful at extreme compression ratios, but synthetic examples may drift from the natural data manifold, limiting interpretability and raising safety concerns.
Dataset pruning\cite{EL2N,dataset_pruning,forgetting,DivBS}, by contrast, evaluates sample-wise importance and selects informative real samples, preserving the natural data and offering more interpretable pruning decisions that better align with practical deployment needs.

Most pruning methods follow a similar pipeline: compute an importance score for each training example, then retain the highest-scoring subset. Thus, their effectiveness is determined primarily by how importance is defined. Existing criteria can be broadly categorized into \emph{Intrinsic} and \emph{Extrinsic} formulations.  Intrinsic criteria evaluate samples independently using instance-level signals—such as loss values~\cite{training_loss,infobatch,EL2N,rho,sb,loshchilov2015online}, gradients~\cite{grad_match,influence,moso,dataset_pruning,DivBS}, forgetting scores~\cite{forgetting}, or uncertainty estimates~\cite{svp,dynunc}.
Extrinsic criteria instead evaluate each sample through its relationships with others, promoting diversity and coverage via objectives such as K-center~\cite{kcenter}, herding~\cite{herding}, or clustering-based selection~\cite{ccs,sorscher2022beyond}.
While both directions are effective in the regimes they target, intrinsic criteria emphasize per-sample informativeness but ignore redundancy, whereas extrinsic formulations encourage global diversity yet overlook how informative each sample is individually. This dichotomy suggests that sample importance is inherently multi-faceted and cannot be fully captured by either perspective alone.

These observations indicate that the value of a sample should be determined jointly by its intrinsic learning potential and its extrinsic interactions with other samples. A few works~\cite{d2p,infomax} have taken steps in this direction by combining multiple criteria, but they ultimately rely on a \emph{fixed} instantiation of sample importance. Such formulations obscure the underlying structure of dataset pruning as a subset selection problem and limit adaptability across different pruning scenarios, such as pruning ratios or datasets.
Accordingly, this calls for a more principled framework that models both aspects within a unified objective, rather than prescribing a single heuristic metric. Moreover, such a framework should admit efficient optimization and retain theoretical guarantees even when the intrinsic and extrinsic components are flexibly instantiated.

To address these challenges, we introduce \textbf{UGIES}, a \textbf{U}nified \textbf{G}raph-based \textbf{I}mportance \textbf{E}valuation \textbf{S}ystem for dataset pruning.
UGIES models the dataset as a graph whose node weights encode intrinsic importance and whose edge weights encode pairwise extrinsic interactions.
Selecting an informative and non-redundant subset then naturally corresponds to finding a high-quality subgraph, which can be formally cast as a Maximum-Weight Clique Problem (MWCP).
However, the MWCP is NP-hard, yet it exhibits a favorable property that enables efficient approximation.
When pruning only a single node, the optimal decision is determined by marginal gains.
Leveraging this observation, we derive a principled greedy strategy that evaluates samples through sample-wise marginal importance, yielding an efficient and scalable pruning algorithm rather than a heuristic rule.
We prove that under a few mild and interpretable conditions, the unified objective becomes submodular, enabling an efficient greedy solver with a theoretically grounded performance guarantee.
Extensive experiments demonstrate that the proposed framework consistently outperforms existing methods.
Our contributions are summarized as threefold:
\vspace{-1em}
\begin{enumerate}
\item We cast dataset pruning from a graph view as the classical MWCP, based on which we introduce a unified pruning framework that measures both intrinsic and extrinsic values of samples.
\vspace{-0.5em}
\item Although the MWCP is NP-hard, we exploit its structural properties and design a principled greedy strategy that efficiently approximates the optimal solution.
\vspace{-0.5em}
\item We prove that a broad family of importance metrics satisfying a few mild and interpretable conditions admits formal approximation guarantees.
These conditions serve as principled guidelines for designing importance metrics with theoretical support.
\end{enumerate}

\section{Related Work}
We group prior pruning works into two complementary families: intrinsic and extrinsic methods.

\noindent
\textbf{Intrinsic criteria.}
Intrinsic methods estimate per-sample learnability from individual signals such as loss, prediction behavior, uncertainty, or gradients.
SB~\cite{sb} and InfoBatch~\cite{infobatch} prioritize high-loss samples, while EL2N~\cite{EL2N} measures sample difficulty using the average prediction error across epochs.
Uncertainty-based approaches, including SVP~\cite{svp} and Dyn-Unc~\cite{dynunc}, select samples with high predictive entropy, and trajectory-based indicators such as Forgetting~\cite{forgetting} track transitions in correctness over training.
Optimization-oriented formulations further assess importance via influence functions~\cite{influence}, Hessian–gradient interactions~\cite{dataset_pruning}, or gradient alignment~\cite{moso,grad_match}.
Despite their efficiency and interpretability, intrinsic methods treat samples independently and therefore struggle to control redundancy under aggressive pruning.

\noindent
\textbf{Extrinsic criteria.}
Extrinsic methods incorporate interactions among samples to preserve geometry, diversity, or distributional coverage.
Mean-embedding matching~\cite{herding} and k-center selection~\cite{kcenter} encourage representative coverage of feature space, while clustering-based approaches~\cite{ccs,sorscher2022beyond} identify exemplars via grouping in learned representations.
DivBS~\cite{DivBS} maintains local diversity by removing samples with redundant gradient directions, and density-aware strategies such as PFB~\cite{pfb} favor points in low-density regions of the dataset.
These approaches improve coverage but may overlook per-sample learnability and often involve heuristic or combinatorial objectives lacking principled optimization guarantees.

\noindent
\textbf{Recent hybrid methods.}
Early active learning works like \cite{senzaki2023activelearningdeepneural} also propose to combine sample-independent value with relational importance for a comprehensiveness.
Recent pruning approaches also attempt to combine intrinsic and relational information.
$\mathcal{D}^2$-pruning~\cite{d2p} propagates difficulty scores on a graph, and InfoMax~\cite{infomax} integrates instance-level and pairwise signals via an information-theoretic potential.
Although effective, these methods typically rely on a \textit{fixed heuristic metric}.
By contrast, our method introduces the unified and principled objective that accommodates a large family of intrinsic and extrinsic metrics.
We provide flexible choices of metrics rather than a fixed measurement, but still manage to make a family of objectives submodular under mild conditions.
This yields general design guidelines, a theoretically grounded greedy solver, and the flexibility to accommodate diverse intrinsic and extrinsic metrics across datasets and pruning regimes.

\noindent
\textbf{Submodularity-based data selection.}
Prior methods such as CRAIG~\citep{craig}, GLISTER~\citep{glister}, and GRAD-MATCH~\citep{grad_match} also exploit submodularity to enable greedy data selection with theoretical guarantees. However, they usually derive submodularity merely for their own specific objective function.
UGIES instead treats submodularity as a design principle for a unified intrinsic-extrinsic importance family, where different metrics can be instantiated while preserving the submodularity and greedy optimizability under mild conditions.

\section{Methodology}
\label{sec:method}

\subsection{Problem Formulation}
\label{sec:problem}

Given the full training dataset $\mathcal{T}=\{x_i\}_{i=1}^{N}$ containing $N$ samples,
our goal is to select a compact yet representative subset
$\mathcal{S} \subseteq \mathcal{T}$ with a cardinality
$|\mathcal{S}| = (1-p)N$, where $p$ denotes the desired pruning ratio.
The selected subset $\mathcal{S}$ is then used to train the target model.
By discarding the $p\cdot N$ samples, dataset pruning reduces the training cost and enables practical acceleration.
Beyond computational savings, the selected subset is expected to preserve or ideally even improve the performance and generalization achieved by the full dataset training.
Therefore, $\mathcal{S}$ should consist of the most valuable samples that collectively contribute to learning.
Thus, dataset pruning can be formulated as a subset selection problem:
\begin{equation}
\begin{aligned}
    \max_{\mathcal{S} \subseteq \mathcal{T}} f(\mathcal{S})
    = \sum_{x_i \in \mathcal{S}} \mathcal{I}(x_i|\mathcal{S}),
    \quad
    \text{s.t. }~|\mathcal{S}| = (1-p)|N|,
    \label{eq:prune}
\end{aligned}
\end{equation}
where $\mathcal{I}(x_i|\mathcal{S})$ denotes an abstract importance valuation of sample $x_i$.
$\mathcal{I}(x_i|\mathcal{S})$ is not restricted to be a purely instance-wise quantity,
but may implicitly depend on the selected subset $\mathcal{S}$ through inter-sample interactions.

This formulation serves as a conceptual starting point for designing our principled objectives.

\subsection{Principles of Dataset Pruning}
\label{sec:importance}

Here, we first identify the following principles that an ideal dataset pruning method should satisfy as the guidelines to design our own pruning objectives:

\textbf{(1) Comprehensiveness.}
An effective pruning objective should capture both the inherent value of individual samples and their collective contribution in the context of the selected subset.
Hence, the objective should reflect two complementary components:

\begin{itemize}
    \item \emph{Intrinsic importance} $\mathcal{I}^{\mathrm{in}}(x_i)$ measures the inherent learning utility of a sample, such as its difficulty, uncertainty, or informativeness.
    It is an instance-wise property, independent of other samples.

    \item \emph{Extrinsic importance} $\mathcal{I}^{\mathrm{ex}}(x_i|\mathcal{S})$ characterizes the contribution of a sample through its interactions with other samples to control redundancy, diversity, or coverage.
\end{itemize}

\textbf{(2) Flexibility.}
A unified importance formulation should accommodate diverse choices of intrinsic and extrinsic criteria and allow their relative influence to be flexibly adjusted.
This flexibility is important for adapting to different pruning ratios, dataset characteristics, and learning scenarios.

These principles impose structural requirements on the pruning objective and will guide our formulation in the Sec.\ref{sec:graph_view}.

\subsection{Rethinking Pruning from a Graph Perspective}
\label{sec:graph_view}

Motivated by the principles in Section~\ref{sec:importance}, we seek a formulation that jointly accounts for the intrinsic learning utility of individual samples and their extrinsic contributions arising from interactions within a selected subset.
In particular, the extrinsic importance introduced by redundancy control and diversity promotion is inherently interaction-based, depending on relationships between pairs of samples.
As such individual value and pair-wise interactions are explicitly modeled, the dataset pruning objective admits a canonical representation as a weighted graph.

Given the full training dataset $\mathcal{T}=\{x_i\}_{i=1}^N$, we construct an undirected weighted graph $G=(V,E)$, where each vertex $v_i\in V$ corresponds to a sample $x_i$, and each edge $(v_i,v_j)\in E$ encodes their pair-wise interaction.
Consistent with the intrinsic--extrinsic decomposition, we assign vertex weights and edge weights as follows
\begin{equation}
\label{eq:graph_weights}
w_i = \alpha\,\mathcal{I}^{\mathrm{in}}(x_i),
\qquad
a_{ij} = g\!\left(D(x_i,x_j)\right),
\end{equation}
where $w_i$ captures the intrinsic importance of $x_i$, $\alpha$ balances intrinsic and extrinsic terms, $D(\cdot,\cdot)$ denotes a distance or similarity metric between samples (also accommodates DivBS and PFB, see Sec.\ref{sec:ab_ex}), and $g(\cdot)$ maps this metric to a desired scale.
The edge weight $a_{ij}$ quantifies the collective contribution incurred when $x_i$ and $x_j$ are simultaneously kept in $\mathcal{S}$.

\noindent\textbf{Maximum Weight Clique Formulation.}
Under this construction, selecting a subset of samples with maximal overall utility corresponds to selecting a subset of vertices whose induced subgraph maximizes the total vertex and edge weights.
This naturally leads to the following classical combinatorial optimization problem in graph theory.
\begin{definition}[Maximum Weight Clique Problem]
Given an undirected graph $G=(V,E)$ with vertex weights $\{w_i\}^V_{v_i}$ and edge weights $\{a_{ij}\}^E_{(v_i,v_j)}$, a \emph{clique} $C\subseteq V$ is a subset of vertices such that every pair of vertices in $C$ is connected by an edge.
The \textit{Maximum Weight Clique Problem}~(MWCP) seeks a clique of fixed cardinality $b$ that maximizes
\begin{equation}
\label{eq:mwcp}
\max_{C\subseteq V}
\Bigg[
\sum_{v_i\in C} w_i
+
\sum_{\{v_i,v_j\}\subseteq C} a_{ij}
\Bigg],
\qquad
\text{s.t. } |C| = b.
\end{equation}
\end{definition}

\begin{remark}
Under the graph construction in Eq.~\eqref{eq:graph_weights}, the MWCP objective in Eq.~\eqref{eq:mwcp} is exactly equivalent to maximizing the combined intrinsic and extrinsic importance of a selected subset of samples.
Vertex weights correspond to intrinsic importance, while edge weights encode pair-wise extrinsic interactions.
Thus, dataset pruning is exactly equivalent to solving an MWCP on the constructed graph.
\end{remark}

\noindent\textbf{Sample-wise Reformulation.}
For algorithmic development, it is convenient to rewrite the set form Eq.~\eqref{eq:mwcp} into an equivalent sample-wise form that measures the contribution of each selected sample explicit.
Moreover, this reformulation does not alter the underlying MWCP.
Let $\mathcal{S}\subseteq\mathcal{T}$ denote the subset corresponding to a clique $C$.
By grouping the vertex and edge terms in Eq.~\eqref{eq:mwcp} by samples, we can obtain
\begin{equation}
\label{eq:unified_objective}
\max_{\mathcal{S}\subseteq\mathcal{T}}
f(\mathcal{S})\!=\!\sum_{x_i\in\mathcal{S}}
\Big[
\alpha\,\mathcal{I}^{\mathrm{in}}(x_i)
+
\mathcal{I}^{\mathrm{ex}}(x_i|\mathcal{S})
\Big],~~
\text{s.t. } |\mathcal{S}|\!=\!b,
\end{equation}
where the extrinsic importance of $x_i$ is defined as
\begin{equation}
\mathcal{I}^{\mathrm{ex}}(x_i|\mathcal{S})
=
\sum_{x_j}^{\mathcal{S}\setminus\{x_i\}}a_{ij}
=
\sum_{x_j}^{\mathcal{S}\setminus\{x_i\}}g\!\left(D(x_i,x_j)\right).
\end{equation}
This unified objective provides a sample-wise view of the MWCP, which will be instrumental for designing efficient solver in the following sections.

Despite its conceptual clarity and flexibility, directly optimizing Eq.~\eqref{eq:unified_objective} is computationally prohibitive for large-scale datasets.
\textit{First}, MWCP is NP-hard, and exact solvers~\cite{hosseinian2020lagrangian} do not scale to realistic dataset sizes.
\textit{Second}, constructing a fully connected graph requires computing all pair-wise interactions, leading to $\mathcal{O}(N^2)$ complexity.
To address these challenges, we propose an efficient optimization framework consisting of two complementary components:
(i) a greedy approximation algorithm for MWCP, which reduces the subset selection complexity, and
(ii) a structured graph sparsification strategy, which reduces the cost of computing extrinsic importance by restricting pair-wise interactions.

\subsection{Greedy Selection with Unified Importance}
\label{sec:greedy_solver}

In this section, we present a greedy approximation algorithm inspired by a key observation on the behavior of the objective when the clique is locally perturbed.
That is, although the MWCP is NP-hard in general, it can still get the optimal solution when the cardinality is set as $b\!=\!N-1$, namely only removing one vertex from $G$.
Specifically, removing a vertex $v_i \in C$ reduces the total clique weight by
\begin{equation}
\label{eq:removal_cost}
\Delta^{-}(v_i|G)
=
w_i
+
\sum_{v_j \in C \setminus \{v_i\}} a_{ij},
\end{equation}
which consists of the vertex weight of $v_i$ and all edge weights incident to $v_i$ within the clique.
Hence, among all possible single-vertex deletions, the optimal choice is simply the vertex with the smallest removal cost $\Delta^{-}(v_i|G)$.

\noindent\textbf{Unified Importance.}
This observation implies that while MWCP is globally intractable, it becomes locally solvable under single-vertex perturbations.
Crucially, the quantity in Eq.~\eqref{eq:removal_cost} depends only on the neighborhood of $v_i$ and admits a natural interpretation as a \emph{marginal contribution} to the objective.

By symmetry, the same quantity can be interpreted from an additive perspective.
Let $\mathcal{S}$ denote the subset of samples corresponding to the clique $C$.
Then unified importance $\mathcal{I}(x_i|\mathcal{S})$  the contribution of $x_i$ to the unified objective in Eq.~\eqref{eq:unified_objective}:
\begin{equation}
\label{eq:marginal_gain}
\mathcal{I}(x_i|\mathcal{S})
=
\Delta(x_i | \mathcal{S})
=
\alpha\,\mathcal{I}^{\mathrm{in}}(x_i)
+
\mathcal{I}^{\mathrm{ex}}(x_i | \mathcal{S}).
\end{equation}
This quantity measures the marginal gain of including $x_i$ in the current subset, or equivalently, the marginal loss incurred by removing it,
which provides a principled justification for defining the importance score as the marginal gain with respect to our objective.
Please note that this is not introduced heuristically, but arises directly from the exact solution of a locally constrained MWCP.

\noindent\textbf{Greedy Selection Strategy.}
Based on the above observation and $\mathcal{I}(x_i|\mathcal{S})$, we propose a greedy approximation that incrementally constructs the subset $\mathcal{S}$ by prioritizing samples with large marginal gains.
Starting from an initial set, at each iteration we select the sample that maximizes Eq.~\eqref{eq:marginal_gain} with respect to the current subset $\mathcal{S}_t$.
Formally, let $\mathcal{S}_t$ denote the subset at iteration $t$.
The greedy update rule is given by
\begin{equation}
x^\star \leftarrow
\arg\max_{x_i \in \mathcal{T} \setminus \mathcal{S}_t}
\mathcal{I}(x_i \mid \mathcal{S}_t),
~~~~
\mathcal{S}_{t+1} \leftarrow \mathcal{S}_t \cup \{x^\star\},
\end{equation}
which repeats until $|\mathcal{S}| = b$.
This algorithm has linear selection complexity in the subset size and avoids the combinatorial explosion of exact solvers.
In practice, we employ an efficient implementation as shown in Algorithm~\ref{alg:graph_pruning}.
Moreover, when the edge weights satisfy certain regularity conditions, the unified objective exhibits favorable properties that allow the greedy solution to admit a provable $(1\!-\!\frac{1}{e})$ worst-case approximation guarantee.
These theoretical analyses are formally established in Sec.~\ref{sec:theory}.
The practical implementation with graph sparsification technique of this greedy solver is detailed in Algorithm~\ref{alg:graph_pruning}.
Additionally, for very large datasets, we propose a \textit{Stochastic Selection} strategy, which computes normalized $\mathcal{I}(x_i|\mathcal{T}\backslash\{x_i\})$ as probability for stochastic sampling, offering a faster alternative. Please refer to Sec.~\ref{sec:stochastic} for details.

\subsection{Structured Graph Sparsification}
\label{sec:sparsification}

The unified objective in Eq.~\eqref{eq:unified_objective} involves all pairwise interactions over the entire dataset.
To reduce the computational cost incurred by all edges of $G$, we introduce the Structured Graph Sparsification that reduces the number of edges while also keeping the form of the MWCP objective.

\begin{algorithm}[t]
\caption{Overall pipeline with Greedy Selection and Structured Graph Sparsification}
\label{alg:graph_pruning}
\begin{algorithmic}[1]
\REQUIRE
Training dataset $\mathcal{T}=\{x_i\}_{i=1}^{N}$,
pruning ratio $p$,
\ENSURE
Pruned subset $\mathcal{S}$.

\STATE Build a fully connected graph $G=(V,E)$ from $\mathcal{T}$.

\STATE Compute intrinsic importance $\mathcal{I}^{in}(x_i)$ for all $x_i \in \mathcal{T}$.
\STATE Perform Structured Graph Sparsification and get neighborhood clusters $\{\mathcal{N}_k\}_{k=1}^{K}$.

\STATE Compute $\{D(x_i,x_j)~|~\forall x_i, x_j \in \mathcal{N}_k\}$ for all  $\mathcal{N}_k$.

\STATE Initialize $\mathcal{S}_0 \leftarrow \varnothing$,~$\mathcal{I}^{ex}(x_i|\mathcal{S}_{0}) \leftarrow 0$ for all $x_i$.

\FOR{$t = 1$ {\bfseries to} $(1-p)N$}
    \FORALL{$x_i \in \mathcal{T} \setminus \mathcal{S}_{t-1}$}

        \IF{$x^\star \in \mathcal{N}(x_i)$}
        \STATE Fetch $\mathcal{I}^{in}(x_i)$ and $D(x_i,x^\star)$.
        \STATE $\mathcal{I}^{ex}(x_i|\mathcal{S}_{t}) \leftarrow \mathcal{I}^{ex}(x_i|\mathcal{S}_{t-1})+g\!\left(D(x_i, x^\star)\right)$
        \STATE $\mathcal{I}(x_i|\mathcal{S}_{t}) \leftarrow \alpha\,\mathcal{I}^{in}(x_i) + \mathcal{I}^{ex}(x_i|\mathcal{S}_{t})$
        \ENDIF
    \ENDFOR
    \STATE Select $x^\star \leftarrow \arg\max_{x_i \in \mathcal{T} \setminus \mathcal{S}} \mathcal{I}(x_i|\mathcal{S}_{t})$.
    \STATE Update $\mathcal{S}_t \leftarrow \mathcal{S}_{t-1} \cup \{x^\star\}$.
\ENDFOR
\STATE {\bfseries Return} $\mathcal{S} \leftarrow \mathcal{S}_{(1-p)N}$.
\end{algorithmic}
\end{algorithm}

We associate each sample $x_i$ with a neighborhood $\mathcal{N}(x_i) \subset \mathcal{T}$, within which extrinsic interactions are evaluated.
Accordingly, the extrinsic importance term is computed as
\begin{equation}
\mathcal{I}^{ex}(x_i \mid \mathcal{S})
=
\sum_{x_j \in \mathcal{S} \cap \mathcal{N}(x_i)}
g\!\left(D(x_i, x_j)\right),
\label{eq:sparse_extrinsic}
\end{equation}
where $\mathcal{N}(x_i)$ defines the interaction support of $x_i$.
We define $\mathcal{N}(x_i)$ through a two-level structure.
First, samples are partitioned by class labels, which reduces the scale of candidate interactions.
Within each class, samples are further clustered based on their features, and $\mathcal{N}(x_i)$ is defined as the cluster to which $x_i$ belongs.
For datasets without any class labels, we can simply divide it solely by clustering.
This construction reflects the observation that redundancy is predominantly local, and that samples outside $\mathcal{N}(x_i)$ typically contribute negligible extrinsic influence.

From a graph-theoretic perspective, this procedure induces a sparse weighted graph whose edge set corresponds to the restricted interaction supports $\{\mathcal{N}(x_i)\}$.
Edges absent from this graph can be equivalently interpreted as having zero weight, i.e., $a_{ij}=0$.
Hence, the sparsified graph still remains formally equivalent to a fully connected graph with a large number of zero-weight edges with less actual edges and computation.
As a result, all formulations and derivations in previous sections, including the MWCP and the greedy solver, remain valid without modification.
Pruning pipeline with this sparsification is shown in the Algorithm~\ref{alg:graph_pruning}. Please refer to Sec.~\ref{sec:app_complexity} for the detailed complexity analysis.

\subsection{Theoretical Analysis and Design Guidance}
\label{sec:theory}

This section provides theoretical justification for the proposed greedy solver.
We show that under mild and easily satisfied conditions, the unified pruning objective in Eq.~\eqref{eq:unified_objective} satisfies a submodular property, under which greedy optimization enjoys a provable approximation guarantee..

\begin{definition}[Submodularity]
\label{def:submodularity}
A set function $f:2^{\mathcal{T}} \rightarrow \mathbb{R}$ is said to be \emph{submodular} if it satisfies the diminishing-returns property: for all $\mathcal{A}\subseteq\mathcal{B}\subseteq\mathcal{T}$ and any $x\notin\mathcal{B}$,
\begin{equation}
\Delta(x \mid \mathcal{A})
\;\ge\;
\Delta(x \mid \mathcal{B}),
\end{equation}
where the marginal gain is defined as
\begin{equation}
\Delta(x \mid \mathcal{A}) = f(\mathcal{A}\cup\{x\}) - f(\mathcal{A}).
\end{equation}
\end{definition}

\begin{lemma}[Submodularity of the unified pruning objective]
\label{lemma:submodular_final}
Suppose $D(\cdot,\cdot)$ is a non-negative distance metric and $g:\mathbb{R}_{\ge0}\rightarrow\mathbb{R}_{\leq0}$ is non-positive on $[0,+\infty)$. Then the unified objective $f(\mathcal{S})$ in Eq.(\ref{eq:unified_objective}) is submodular.
\end{lemma}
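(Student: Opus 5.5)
The plan is to compute the marginal gain of $f$ explicitly and check the diminishing-returns inequality of Definition~\ref{def:submodularity} directly. First I will rewrite $f$ in the set form of Eq.~\eqref{eq:mwcp}. Summing $\mathcal{I}^{\mathrm{ex}}(x_i|\mathcal{S})$ over $x_i\in\mathcal{S}$ counts every unordered pair $\{x_i,x_j\}\subseteq\mathcal{S}$ twice, since $a_{ij}=g(D(x_i,x_j))$ is symmetric because $D$ is a metric. Hence
\begin{equation*}
f(\mathcal{S})=\sum_{x_i\in\mathcal{S}}\alpha\,\mathcal{I}^{\mathrm{in}}(x_i)+c\sum_{\{x_i,x_j\}\subseteq\mathcal{S}}a_{ij},
\end{equation*}
where $c=2$ under the literal sample-wise form of Eq.~\eqref{eq:unified_objective} and $c=1$ under the MWCP form. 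The argument only needs $c>0$, so the factor does not matter. The same holds for the sparsified version of Eq.~\eqref{eq:sparse_extrinsic}, provided $a_{ij}=0$ outside the neighborhood structure; this is consistent if $\mathcal{N}$ is symmetric, as it is for cluster-based neighborhoods.

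Next, for any $\mathcal{A}$ and any $x\notin\mathcal{A}$, I compute the marginal gain:
\begin{equation*}
\Delta(x\mid\mathcal{A})=f(\mathcal{A}\cup\{x\})-f(\mathcal{A})=\alpha\,\mathcal{I}^{\mathrm{in}}(x)+c\sum_{x_j\in\mathcal{A}}g\big(D(x,x_j)\big).
\end{equation*}
This holds because the only new terms are the vertex weight of $x$ and the edges joining $x$ to $\mathcal{A}$. The intrinsic part does not depend on the set.

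Now take $\mathcal{A}\subseteq\mathcal{B}$ and $x\notin\mathcal{B}$. Subtracting gives
\begin{equation*}
\Delta(x\mid\mathcal{A})-\Delta(x\mid\mathcal{B})=-c\sum_{x_j\in\mathcal{B}\setminus\mathcal{A}}g\big(D(x,x_j)\big).
\end{equation*}
Since $D\ge 0$, each argument of $g$ lies in $[0,+\infty)$, where $g\le 0$ by hypothesis. So every summand is non-positive, and the difference is non-negative. This is exactly the diminishing-returns property, so $f$ is submodular.

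I do not expect a real obstacle here. The only delicate step is bookkeeping: making sure the sample-wise form counts each edge with a positive constant, and that the graph weights are symmetric so the marginal gain is exactly the sum of edges to $\mathcal{A}$. Non-negativity of $D$ is used only to ensure $g$ is evaluated on its domain, where it is non-positive. No sign condition on $\mathcal{I}^{\mathrm{in}}$ or $\alpha$ is needed for submodularity; it would matter only for monotonicity in a later approximation result.
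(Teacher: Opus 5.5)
Your proposal is correct and follows the paper's proof essentially step for step. Like the paper, you write the marginal gain as the intrinsic term plus the sum of edge weights to the current set, subtract the gains for $\mathcal{A}\subseteq\mathcal{B}$, and use $g\le 0$ on $[0,+\infty)$ to sign the sum over $\mathcal{B}\setminus\mathcal{A}$. Your constant $c>0$ adds care the paper lacks: the paper identifies $\Delta(x\mid\mathcal{A})$ directly with $\mathcal{I}(x\mid\mathcal{A})$, which is exact for the MWCP form but drops the factor $2$ that the literal sample-wise form of Eq.~\eqref{eq:unified_objective} produces.
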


\noindent\textbf{Proof.}
For any $\mathcal{A}\subseteq\mathcal{B}$ and $x\notin\mathcal{B}$,
the marginal gains of adding $x$ can be written as:
\begin{equation}
\begin{aligned}
\Delta(x|\mathcal{A})
&\!=\! \mathcal{I}(x|\mathcal{A}) \!=\!
\alpha\mathcal{I}^{in}(x)
\!+\!
\sum_{y\in\mathcal{A}} g(D(x,y)), \\
\Delta(x|\mathcal{B})
&\!=\! \mathcal{I}(x|\mathcal{B}) \!=\!
\alpha\mathcal{I}^{in}(x)
\!+\!
\sum_{y\in\mathcal{B}} g(D(x,y)).
\end{aligned}
\end{equation}
Their difference is
\begin{equation}
\begin{aligned}
\Delta(x|\mathcal{A})-\Delta(x|\mathcal{B})
\!=\! -\!\sum_{y\in\mathcal{B\setminus A}} g(D(x,y)).
\end{aligned}
\end{equation}
Since $D(\cdot,\cdot)\ge 0$ and $g(\cdot)\le 0$ on $[0,+\infty)$, every term $g(D(x,y)) \le 0$, which implies
\begin{equation}
\Delta(x|\mathcal{A})-\Delta(x|\mathcal{B}) \ge 0.
\end{equation}
Hence $\Delta(x|\mathcal{A}) \ge \Delta(x|\mathcal{B})$ for all $\mathcal{A}\subseteq\mathcal{B}$, establishing the submodularity of $f(\mathcal{S})$.
\hfill$\square$

\begin{table}[t!]
  \centering
  \small
    \caption{Comparison to state-of-the-art methods on CIFAR-10/100 using ResNet-18. The proposed method achieves the best performance. * denotes different prune ratio settings. $\dagger$ denotes our reproduction. }
  \resizebox{\linewidth}{!}{
  \setlength{\tabcolsep}{4pt}
  \begin{tabular}{ l |c c c |c c c }
      \toprule
       \bf Dataset & \multicolumn{3}{c}{ CIFAR-10} & \multicolumn{3}{|c}{ CIFAR-100}\\
       \midrule
      \bf Pruning Ratio & 30\% & 50\% & 70\% & 30\% & 50\% & 70\% \\
      \midrule

      Random & 94.6 & 93.3 & 90.2 & 73.8 & 72.1 & 69.7 \\
      RS2 \cite{random} & - & 95.2 & 94.3 & 78.3 & 77.6 & 76.1 \\

        Herding\cite{herding} & 92.2 & 88.0 & 80.1 & 73.1 & 71.8 & 69.6 \\

        Influence\cite{influence} & 93.1 & 91.3 & 88.3 & 74.4 & 72.0 & 68.9 \\

        K-Center\cite{kcenter} & 94.7 & 93.9 & 90.9 & 74.1 & 72.2 & 70.2 \\

        SVP\cite{svp} & 95.0 & 94.5 & 90.3 & 74.2 & 72.3 & 69.8 \\

        Craig\cite{craig} & 94.8 & 93.3 & 88.4 & 74.4 & 71.9 & 69.7 \\
        SB-12hours\textsuperscript{$\dagger$}\cite{sb} & 95.5 & 95.1 & 93.2 & - & - & - \\
        GraNd\cite{EL2N} & 95.3 & 94.6 & 91.2 & 74.6 & 71.4 & 68.8 \\

        Glister\cite{glister} & 95.2 & 94.0 & 90.9 & 74.6 & 73.2 & 70.4 \\

        $\epsilon$-greedy\cite{raju} & 95.2 & 94.9 & 94.1 & 76.4 & 74.8 & - \\

       UCB\cite{raju} & 95.3 & 94.7 & 93.9 & 77.3 & 75.3 & - \\

       Forgetting\cite{fogetting} & 94.7 & 94.1 & 91.7 & 75.3 & 73.1 & 69.9 \\

       EL2N\cite{EL2N} & 95.3 & 95.1 & 91.9 & 77.2 & 72.1 & - \\

       Training Loss\cite{training_loss} & - & - & 94.6 & - & - & 72.6 \\

       AUM\cite{AUM} & 95.1 & 95.3 & 91.4 & 76.9 & 67.4 & 30.6 \\

       Moderate\cite{moderate} & 93.7 & 92.6 & 90.6 & 74.3 & 68.3 & 57.8 \\

       \cite{katharopoulos2018not} & - & - & 94.4 & - & - & 73.2 \\

       \textit{Dataset Pruning}\cite{dataset_pruning} & 94.9 & 93.8 & 90.8 & 77.2 & 73.1 & - \\

       CCS\cite{ccs} & 95.4 & 95.0 & 93.0 & 77.1 & 74.5 & 68.9 \\

        MoSo\textsuperscript{$\dagger$}\cite{moso} & - & - & - & 76.7 & 72.3 & 65.8 \\

       InfoBatch*\cite{infobatch} & 95.6 & 95.1 & 94.7 & 78.2 & 78.1 & 76.5 \\

       DivBS\textsuperscript{$\dagger$}\cite{DivBS} & 95.4 & 95.2 & 95.1 & 78.5 & 78.2 & 77.2 \\

\rowcolor[gray]{0.9}
        & \bf 95.9 & \bf 95.4 & \bf 95.2 & \bf 78.9 & \bf 78.6 & \bf 77.6 \\
\rowcolor[gray]{0.9} \multirow{-2}{*}{\bf Ours}
        & \textcolor{green}{$\uparrow$0.3} & \textcolor{red}{$\downarrow$0.2} & \textcolor{red}{$\downarrow$0.4} & \textcolor{green}{$\uparrow$0.7} & \textcolor{green}{$\uparrow$0.4} & \textcolor{red}{$\downarrow$0.6} \\

       \midrule
       Full Data & \multicolumn{3}{c|}{95.6\textsubscript{$\pm$0.1}} & \multicolumn{3}{c}{78.2\textsubscript{$\pm$0.1}} \\
      \bottomrule
  \end{tabular}
  }
  \label{tab:cifar}
\end{table}

This lemma formalizes a mild condition that ensure the submodular structure of the unified objective: a non-negative distance metric and a non-positive mapping.

\noindent\textbf{Design Flexibility and Example instantiation.}
The submodularity holds for a large family of $\mathcal{I}^{in}(\cdot)$ and $D(\cdot,\cdot)$, as long as the induced pairwise interaction is non-positive.
This flexibility allows the unified importance to accommodate diverse criteria such as loss, uncertainty, or forgetting scores, while preserving theoretical guarantees.
We provide an instantiation satisfying Lemma~\ref{lemma:submodular_final} as an example:
\begin{equation}
\begin{aligned}
\mathcal{I}^{in}(x_i) & = H(\mathbf{p}_i)= - \sum^K_{c=1} p^c_i\, \log p^c_i, \\
\mathcal{I}^{ex}(x_i|\mathcal{S}) & = \sum_{x_j\in\mathcal{S}}\left[ \phi\big(D^{cos}(x_i,x_j)\big)-1 \right],
\end{aligned}
\label{eq:our_imp_special}
\end{equation}
where $H(\mathbf{p}_i)$ is the entropy of the predicted probability distribution $\mathbf{p}_i\in \mathbb{R}^{1\times K}$ of sample $x_i$, $K$ is the number of classes; and $\phi(\cdot)$ denotes the sigmoid function.
Intuitively, with these settings, maximizing $f(\mathcal{S})$ leads to selecting a subset with large intrinsic information content and strong sample diversity.
\textit{Please note that this is only an effective example instantiation. It does not depend on any particular fixed choice.} See Sec.~\ref{sec:app_in} and Sec.~\ref{sec:ab_ex} for more instantiations.

\begin{table}[t!]
  \centering
    \caption{Performance Comparison on ImageNet-1k. ResNet-50 and Swin-T are trained from scratch for 90 and 300 epochs, respectively.
  The results demonstrate the effectiveness of the proposed method on both CNNs and Transformers.}
  \resizebox{0.955\linewidth}{!}{
  \begin{tabular}{ l c | c c c }
      \toprule
       \multirow{11}{*}{ \rotatebox[origin=c]{90}{\shortstack{ResNet-50}}} & \bf Pruning Ratio  & 30\% & 50\% & 70\% \\
      \cmidrule{2-5}

       & Random & 72.2\textsubscript{\textcolor{red}{$\downarrow$4.2}} & 69.1\textsubscript{\textcolor{red}{$\downarrow$7.3}} & 65.9\textsubscript{\textcolor{red}{$\downarrow$10.5}}  \\

       & Herding\cite{herding} & 73.5\textsubscript{\textcolor{red}{$\downarrow$2.9}} & 69.3\textsubscript{\textcolor{red}{$\downarrow$7.1}} & 65.1\textsubscript{\textcolor{red}{$\downarrow$11.3}}  \\

       & Forgetting\cite{fogetting} & 74.8\textsubscript{\textcolor{red}{$\downarrow$1.6}} & 72.0\textsubscript{\textcolor{red}{$\downarrow$4.4}} & 67.8\textsubscript{\textcolor{red}{$\downarrow$8.6}}\\

       & EL2N\cite{EL2N} & 74.3\textsubscript{\textcolor{red}{$\downarrow$2.1}} & 68.5\textsubscript{\textcolor{red}{$\downarrow$7.9}} & 54.8\textsubscript{\textcolor{red}{$\downarrow$21.6}} \\

       & $\mathcal{D}^2$-pruning\textsuperscript{$\dagger$}\cite{d2p} & 75.0\textsubscript{\textcolor{red}{$\downarrow$1.4}} & 73.4\textsubscript{\textcolor{red}{$\downarrow$3.0}} & 68.2\textsubscript{\textcolor{red}{$\downarrow$8.2}} \\

       & InfoMax\textsuperscript{$\dagger$}\cite{infomax} & 75.2\textsubscript{\textcolor{red}{$\downarrow$1.2}} & 73.7\textsubscript{\textcolor{red}{$\downarrow$2.7}} & 68.6\textsubscript{\textcolor{red}{$\downarrow$7.8}} \\

       & Moderate\cite{moderate} & 75.2\textsubscript{\textcolor{red}{$\downarrow$1.2}} & 72.2\textsubscript{\textcolor{red}{$\downarrow$4.2}} & 67.7\textsubscript{\textcolor{red}{$\downarrow$8.7}} \\

       & MoSo\textsuperscript{$\dagger$}\cite{moso} & 76.5\textsubscript{\textcolor{green}{$\uparrow$0.1}} & 73.5\textsubscript{\textcolor{red}{$\downarrow$2.9}} & 70.0\textsubscript{\textcolor{red}{$\downarrow$6.4}} \\

       & InfoBatch\textsuperscript{$\dagger$}\cite{infobatch} & 76.5\textsubscript{\textcolor{green}{$\uparrow$0.1}} & 75.8\textsubscript{\textcolor{red}{$\downarrow$0.6}} & 74.9\textsubscript{\textcolor{red}{$\downarrow$1.5}} \\

       \rowcolor[gray]{0.9}
        \cellcolor{white} &  & \bf 77.0\textsubscript{\textcolor{green}{$\uparrow$0.6}} & \bf 76.5\textsubscript{\textcolor{green}{$\uparrow$0.1}} & \bf 75.3\textsubscript{\textcolor{red}{$\downarrow$1.1}} \\
       \rowcolor[gray]{0.9}
        \cellcolor{white} & \multirow{-2}{*}{\bf Ours} & $\pm$0.1 & $\pm$0.1 & $\pm$0.2 \\

       \cmidrule{2-5}
       & Full Data & \multicolumn{3}{c}{76.4\textsubscript{$\pm$0.2}} \\
        \bottomrule
        \toprule
       \multirow{11}{*}{ \rotatebox[origin=c]{90}{\shortstack{Swin-T}}} & \bf Pruning Ratio & 30\% & 40\% & 50\% \\
        \cmidrule{2-5}
        & Random & 77.2\textsubscript{\textcolor{red}{$\downarrow$2.4}} & 75.9\textsubscript{\textcolor{red}{$\downarrow$3.7}} & 74.5\textsubscript{\textcolor{red}{$\downarrow$5.1}}  \\
        & Forgetting\cite{fogetting} & 78.3\textsubscript{\textcolor{red}{$\downarrow$1.3}} & 77.6\textsubscript{\textcolor{red}{$\downarrow$2.0}} & 74.3\textsubscript{\textcolor{red}{$\downarrow$5.3}}\\
        & EL2N\cite{EL2N} & 78.2\textsubscript{\textcolor{red}{$\downarrow$1.4}} & 75.9\textsubscript{\textcolor{red}{$\downarrow$3.7}} & 71.1\textsubscript{\textcolor{red}{$\downarrow$8.5}} \\
        & SVP\cite{svp} & 76.6\textsubscript{\textcolor{red}{$\downarrow$3.0}} & 74.9\textsubscript{\textcolor{red}{$\downarrow$4.7}} & 72.7\textsubscript{\textcolor{red}{$\downarrow$6.9}} \\
        & Moderate\cite{moderate} & 77.1\textsubscript{\textcolor{red}{$\downarrow$2.5}} & 75.9\textsubscript{\textcolor{red}{$\downarrow$3.7}} & 75.0\textsubscript{\textcolor{red}{$\downarrow$4.6}} \\
        & InfoBatch\textsuperscript{$\dagger$}\cite{infobatch} & 78.6\textsubscript{\textcolor{red}{$\downarrow$1.0}} & 78.2\textsubscript{\textcolor{red}{$\downarrow$1.4}} & 77.5\textsubscript{\textcolor{red}{$\downarrow$2.1}} \\
        & Dyn-Unc\cite{dynunc} & 79.1\textsubscript{\textcolor{red}{$\downarrow$0.5}} & 78.5\textsubscript{\textcolor{red}{$\downarrow$1.1}} & 77.6\textsubscript{\textcolor{red}{$\downarrow$2.0}} \\

       & PFB\cite{pfb} & 79.6\textsubscript{\textcolor{green}{$\uparrow$0.0}} &  79.2\textsubscript{\textcolor{red}{$\downarrow$0.4}} &  78.2\textsubscript{\textcolor{red}{$\downarrow$1.4}} \\

\rowcolor[gray]{0.9}
       \cellcolor{white} &  & \bf 79.7\textsubscript{\textcolor{green}{$\uparrow$0.1}} & \bf 79.4\textsubscript{\textcolor{red}{$\downarrow$0.2}} & \bf 78.9\textsubscript{\textcolor{red}{$\downarrow$0.7}} \\
\rowcolor[gray]{0.9}
        \cellcolor{white} & \multirow{-2}{*}{\bf Ours} & $\pm$0.1 & $\pm$0.1 & $\pm$0.1 \\
        \cmidrule{2-5}
       & Full Data & \multicolumn{3}{c}{79.6\textsubscript{$\pm$0.1}} \\
      \bottomrule
  \end{tabular}
  }
  \label{tab:imagenet}
\end{table}

\noindent\textbf{Greedy approximation guarantee.}
We now recall the classical result of Nemhauser et al.~\cite{nemhauser1978analysis}, which characterizes the performance of greedy optimization on monotone submodular functions.

\begin{theorem}[Nemhauser et al.~\cite{nemhauser1978analysis}]
\label{thm:greedy_final}
Let $f$ be a monotone submodular set function, namely for any two sets
$\mathcal{A},\mathcal{B}\subseteq\mathcal{T}$ satisfying
$\mathcal{A}\subseteq\mathcal{B}$, the following holds:
\begin{equation}
\label{eq:mono_def}
f(\mathcal{A}) \;\le\; f(\mathcal{B}).
\end{equation}
Under the constraint $|\mathcal{S}| = b$, the greedy solution $\mathcal{S}$ satisfies
\begin{equation}
\label{eq:greedy_bound_final}
f(\mathcal{S})
\;\ge\;
\Bigl(1 - \frac{1}{e}\Bigr)\, f(\mathcal{S}^{\ast}),
\end{equation}
where $\mathcal{S}^{\ast}$ is the optimal subset of size $b$.
\end{theorem}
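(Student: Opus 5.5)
The plan is the classical Nemhauser--Wolsey--Fisher argument. It needs submodularity (supplied for our objective by Lemma~\ref{lemma:submodular_final}), monotonicity as in Eq.~\eqref{eq:mono_def}, and the normalization $f(\varnothing)=0$. The last holds for Eq.~\eqref{eq:unified_objective} because the empty sum vanishes; I would state it explicitly, since the bound is multiplicative and fails without it. Let $\mathcal{S}_0=\varnothing\subseteq\mathcal{S}_1\subseteq\dots\subseteq\mathcal{S}_b=\mathcal{S}$ be the greedy iterates, and let $\mathcal{S}^\ast=\{o_1,\dots,o_b\}$ be an optimal set of size $b$.

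\textbf{Key per-step inequality.} I will show that for every $t<b$,
\begin{equation*}
f(\mathcal{S}^\ast)-f(\mathcal{S}_t)\;\le\; b\,\bigl(f(\mathcal{S}_{t+1})-f(\mathcal{S}_t)\bigr).
\end{equation*}
\begin{enumerate}
\item By monotonicity, $f(\mathcal{S}^\ast)\le f(\mathcal{S}^\ast\cup\mathcal{S}_t)$.
\item Write $f(\mathcal{S}^\ast\cup\mathcal{S}_t)-f(\mathcal{S}_t)$ as a telescoping sum of marginal gains $\Delta(o_j\mid\mathcal{S}_t\cup\{o_1,\dots,o_{j-1}\})$. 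Terms with $o_j\in\mathcal{S}_t$ are zero.
\item By submodularity (Definition~\ref{def:submodularity}), each term is at most $\Delta(o_j\mid\mathcal{S}_t)$.
\item By the greedy rule, each $\Delta(o_j\mid\mathcal{S}_t)$ is at most $\Delta(x^\star\mid\mathcal{S}_t)=f(\mathcal{S}_{t+1})-f(\mathcal{S}_t)$. There are at most $b$ terms, which gives the inequality.
\end{enumerate}

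\textbf{Recursion.} Set $\delta_t=f(\mathcal{S}^\ast)-f(\mathcal{S}_t)$. The per-step inequality rearranges to $\delta_{t+1}\le(1-1/b)\,\delta_t$. Iterating from $\delta_0=f(\mathcal{S}^\ast)$ (using $f(\varnothing)=0$) gives $\delta_b\le(1-1/b)^b f(\mathcal{S}^\ast)\le e^{-1}f(\mathcal{S}^\ast)$, via $1-x\le e^{-x}$. This is exactly Eq.~\eqref{eq:greedy_bound_final}.

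\textbf{Main obstacle.} The delicate step is the chain in the per-step inequality. It must correctly use both monotonicity (step 1) and submodularity applied to nested sets $\mathcal{S}_t\subseteq\mathcal{S}_t\cup\{o_1,\dots,o_{j-1}\}$ (step 3). I would also flag that for our instantiation monotonicity is a genuine extra hypothesis. Since $g\le 0$, monotonicity requires $\alpha\,\mathcal{I}^{\mathrm{in}}(x)+\mathcal{I}^{\mathrm{ex}}(x\mid\mathcal{S})\ge 0$ along the relevant sets, for example $\alpha\,\mathcal{I}^{\mathrm{in}}$ dominating the accumulated negative edge weights within a neighborhood. The theorem itself simply assumes it, so the proof can take it as given.
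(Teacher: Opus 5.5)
Your proof is correct and follows the same Nemhauser--Wolsey--Fisher route as the paper's appendix proof: a per-step gap inequality from submodularity plus the greedy rule, then a geometric recursion on $f(\mathcal{S}^\ast)-f(\mathcal{S}_t)$. Where you differ from the paper, you are right and the paper is not: you bound the number of terms by $b$, whereas the appendix uses $b-t$, which is only a lower bound on $|\mathcal{S}^\ast\setminus\mathcal{S}_t|$ and would make $\prod_{k=1}^{b}(1-1/k)=0$, i.e.\ ``prove'' that greedy is exactly optimal; you also correctly make explicit the monotonicity step $f(\mathcal{S}^\ast)\le f(\mathcal{S}^\ast\cup\mathcal{S}_t)$ and the normalization $f(\varnothing)=0$, which the paper's ``add a constant'' remark does not legitimately supply for a multiplicative bound.
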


In our case, $f(\mathcal{S})$ is submodular by Lemma~\ref{lemma:submodular_final}.
Monotonicity can be ensured by adding a constant modular term that upper-bounds the magnitude of the pairwise penalty.
Such a transformation does not affect the greedy selection order and is omitted in practice.
Details are provided in the Sec.~\ref{sec:app_mon}.

\begin{table*}[t!]
  \centering
  \small
  \caption{Time cost to reach the same Acc performance on ImageNet-1k using ResNet-50. Our method achieves lossless training acceleration and saves the overall cost by 43.2\%. Our results are obtained under a 50\% pruning ratio. EL2N-50 and our method need a preprocess phase. EL2N-50 need to train a model from scratch for 50 epochs to collect the L2 prediction error for each sample. Our method need to extract features for the graph construction and the computation of extrinsic importance. }
  \resizebox{0.98\linewidth}{!}{
    \setlength{\tabcolsep}{4pt}
  \begin{tabular}{ l c c | c | c c c c c c }
        \toprule
        \multirow{2}{*}{\bf Method} & \multirow{2}{*}{\bf Year} & \multirow{2}{*}{\bf Pruning Freq.}  & \multirow{2}{*}{\bf Acc(\%)} & \multirow{2}{*}{\bf Training(h)} & \multicolumn{2}{c}{ \bf Pruning Overhead(h)} & \multirow{2}{*}{\bf Total Time}  & \multirow{2}{*}{\bf Reduction} \\ \cline{6-7}
         & & & & & preprocess & pruning & \\

        \midrule
        Full Data & - & -  & 76.4 & 13.9 & - & - & 13.9 & - \\
        \midrule
        EL2N-50\textsuperscript{$\dagger$}\cite{EL2N} & 2018 & Single-shot & 71.5\textsubscript{\textcolor{red}{$\downarrow$4.9} } & 10.1 & 7.72 & 0.03 & 17.9 & \textcolor{gray}{28.8\%$\uparrow$}  \\
        UCB\textsuperscript{$\dagger$}\cite{raju} & 2021 & Epoch & 75.8\textsubscript{\textcolor{red}{$\downarrow$0.6} }  & 10.1 & - & 0.08 & 10.2  & 26.6\%$\downarrow$ \\
        InfoBatch\textsuperscript{$\dagger$}\cite{infobatch}& 2023 & Epoch & \bf 76.6\textsubscript{\textcolor{green}{$\uparrow$0.2} }  & 10.1 & - & 0.07 & 10.2  & 26.6\%$\downarrow$ \\
        DivBS\textsuperscript{$\dagger$}\cite{DivBS} & 2024 & Batch & 76.4\textsubscript{\textcolor{green}{$\uparrow$0.0} }  & 11.2 & - & 0.72 & 11.9  & 14.4\%$\downarrow$ \\

        PFB\textsuperscript{$\dagger$}\cite{pfb} & 2025 & Batch & 76.4\textsubscript{\textcolor{green}{$\uparrow$0.0} }  & 8.6 & - & \bf 0.06 & 8.7  & 37.4\%$\downarrow$ \\

\rowcolor[gray]{0.9}
        \bf Ours & - & Epoch & \bf 76.5\textsubscript{\textcolor{green}{$\uparrow$0.1} }  & \bf 7.3 & 0.55 & 0.07 & \bf 7.9 & \bf 43.2\%$\downarrow$ \\
        \bottomrule
  \end{tabular}
  }
  \label{tab:time}
\end{table*}

\begin{figure*}[t!]
  \centering

  \begin{minipage}{0.32\linewidth}
    \centering
    \includegraphics[width=\linewidth]{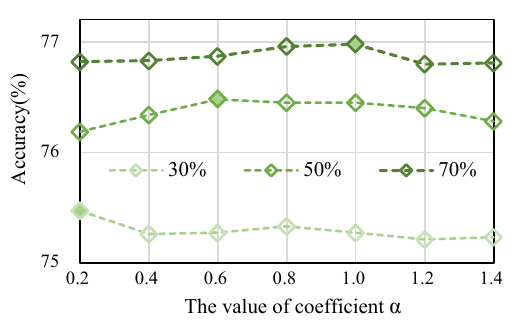}
    \captionof{figure}{Ablation on coefficient $\alpha$. The best results are highlighted using solid markers.}
    \label{fig:ab-alpha}
  \end{minipage}
  \hfill
  \begin{minipage}{0.32\linewidth}
    \centering
    \includegraphics[width=\linewidth]{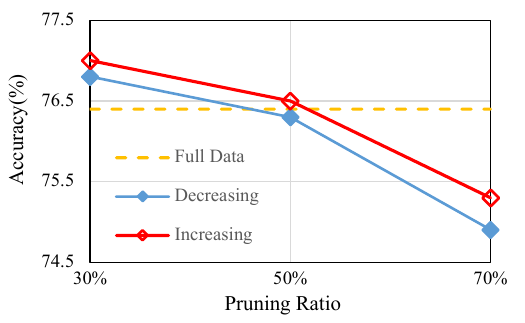}
    \captionof{figure}{Ablation on the monotonicity of extrinsic importance.}
    \label{fig:ab-extrin}
  \end{minipage}
  \hfill
  \begin{minipage}{0.32\linewidth}
    \centering
    \includegraphics[width=\linewidth]{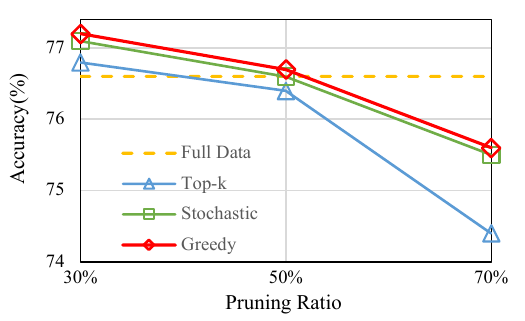}
    \captionof{figure}{Ablation on different solvers for the MWCP objective.}
    \label{fig:ab-solver}
  \end{minipage}
\end{figure*}
\noindent\textbf{Design Guidance.}
Although the original MWCP formulation is NP-hard, the unified objective admits a submodular structure under the mild conditions:
\emph{
\begin{enumerate}
    \item a non-negative distance metric $D(\cdot,\cdot)$ and a mapping $g(\cdot)$ that is non-positive on $[0,+\infty)$.
    \item non-negative marginal gains $\mathcal{I}(x|\mathcal{A})$ for any $x$ and $\mathcal{A}$.
\end{enumerate}
}
Consequently, the greedy solver used in our algorithm enjoys a constant-factor approximation guarantee.
From a practical perspective, this analysis provides clear guidance for importance design: any choice of intrinsic and extrinsic scores that induces non-positive pairwise interactions and non-negative marginal gains leads to a theoretically grounded and scalable pruning algorithm.

\section{Experiments}
\label{sec:exp}

\subsection{Experimental Setup}
\textbf{Datasets.} Following the most common setting of the latest works\cite{dynunc,DivBS,infobatch}, we conduct experiments to evaluate the performance of UGIES on CIFAR-10\cite{cifar}, CIFAR-100\cite{cifar}, and ImageNet-1k\cite{imagenet} for image classification.
CIFAR-10/100 datasets consist of 32$\times$32 sized images that are divided into 10 and 100 classes, respectively. Each of them contains 50,000 images for training and 10,000 for testing.
ImageNet-1k is a large dataset containing 1,281,167 training images and 50,000 validation images that are categorized into 1,000 classes.

\noindent \textbf{Implementation details.}
Our method follows an epoch-wise dynamic pruning paradigm.
We employ ResNet-18\cite{resnet} as a backbone on CIFAR-10/100 trained for 200 epochs.
The batch size is set as 128.
On ImageNet-1k, we evaluate the performance of UGIES using ResNet-50 and Swin-T\cite{liu2021swin} to evaluate the generalization across different network architectures.
The batch size of both ResNet-50 and Swin-T is set as 1024.
Other detailed settings can be found in the Sec.~\ref{sec:app_settings}.

\noindent \textbf{Settings of $\mathcal{I}^{in}$ and $\mathcal{I}^{ex}$.}
As discussed in Sec.~\ref{sec:method}, our framework supports many intrinsic and extrinsic
importance functions as long as they satisfy the mild conditions in
Lemma~\ref{lemma:submodular_final}. Although Eq.~\ref{eq:our_imp_special} provides an
effective instance, different tasks or pruning ratios may favor different forms of
$\mathcal{I}^{in}$ and $\mathcal{I}^{ex}$, motivating the flexibility choice.

\begin{figure*}[t!]
  \centering

  \begin{minipage}{0.32\linewidth}
  \centering
  \small
    \captionof{table}{Ablation of the intrinsic and extrinsic term. Combining both of them yields better results across all pruning ratios.}
  \setlength{\tabcolsep}{2pt}
  \resizebox{0.96\linewidth}{!}{
  \begin{tabular}{ c c | c c c }
      \toprule
      \multicolumn{2}{c|}{$\mathcal{I}(x_i|\mathcal{S})$} & \multicolumn{3}{c}{ Pruning Ratio}\\
       \cmidrule{1-5}
     $\mathcal{I}^{in}(x_i)$ & $\mathcal{I}^{ex}(x_i|\mathcal{S})$ & 30\% & 50\% & 70\% \\
      \cmidrule{1-5}

       $\checkmark$ &  & 76.6 & 76.0 & 74.4 \\

        & $\checkmark$ & 76.3 & 76.1 & 75.3 \\

        $\checkmark$ & $\checkmark$ & \bf 77.0 & \bf 76.5 & \bf 75.4 \\
       \cmidrule{1-5}
       \multicolumn{2}{c|}{Full Data} & \multicolumn{3}{c}{76.4\textsubscript{$\pm$0.1}} \\
        \bottomrule
  \end{tabular}
  }
  \vspace{3pt}
  \label{tab:ab_two_term}
  \end{minipage}
  \hfill
  \begin{minipage}{0.32\linewidth}
  \centering
  \small
    \captionof{table}{Ablation on different intrinsic importance. The details are shown in Sec.\ref{sec:app_in}. }
  \setlength{\tabcolsep}{4pt}
  \resizebox{0.96\linewidth}{!}{
  \begin{tabular}{ c | c c c }
      \toprule
     $\mathcal{I}^{in}(x_i)$ & 30\% & 50\% & 70\% \\
      \cmidrule{1-4}
    Entropy & \bf 77.0 & \bf76.5 & 75.2 \\
    Loss & 76.8 & \bf 76.5 & \bf 75.5 \\
    Gradient Norm & 76.8 & 76.1 & 74.9 \\
    Loss Variation & 76.6 & 76.3 & 74.9 \\
    Entropy Variation & 76.8 & 76.2 & 74.7 \\
    Loss $\times$ Entropy & 76.8 & 76.2 & 75.2 \\
       \cmidrule{1-4}
       Full Data & \multicolumn{3}{c}{76.4\textsubscript{$\pm$0.1}} \\
        \bottomrule
  \end{tabular}
  }
  \vspace{3pt}
  \label{tab:ab_in}
  \end{minipage}
  \hfill
  \begin{minipage}{0.32\linewidth}
    \centering
    \includegraphics[width=0.9\linewidth]{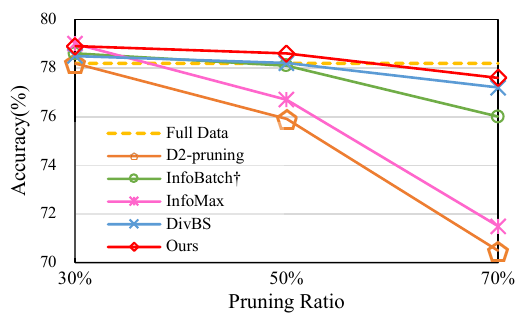}
    \vspace{-6pt}
    \captionof{figure}{Comparison across pruning method families on CIFAR-100. Our method excels not only over intrinsic and extrinsic ones, but also over hybrid ones.}
    \label{fig:ab_infomax}
  \end{minipage}
\end{figure*}

\subsection{Comparisons with SOTA Methods}
We compare UGIES with state-of-the-art dataset pruning methods on CIFAR-10/100 and ImageNet-1k, and report Top-1 accuracy of models trained on the selected subset $\mathcal{S}$.
Following standard practice, we use the pruning ratio $p=\frac{|\mathcal{T}\setminus\mathcal{S}|}{|\mathcal{T}|}$ to measure data reduction.
For batch-wise pruning methods~\cite{DivBS,sb,pfb}, the pruning ratio is defined within each mini-batch as $p=\frac{N_S}{N_B}$, where $N_B$ is the batch size and $N_S$ the number of retained samples.
Please also note that

InfoBatch~\cite{infobatch} applies pruning only to a low-loss subset, making its effective pruning ratio roughly half of the reported value. We therefore denote its original results using `*' and our reproduction using `$\dagger$', where 95\% of samples are treated as pruning candidates for consistency.
Random pruning is included as a baseline, and all reproduced results under our training setup are marked with `$\dagger$'.

\noindent
\textbf{Performance Comparisons.}
We report the Top-1 accuracy~(Acc) results of ResNet-18 on CIFAR-10/100 and ResNet-50 on ImageNet-1k are reported in \cref{tab:cifar} and \cref{tab:imagenet}, respectively.
On CIFAR-10/100, the proposed method clearly outperforms existing methods.
Our method achieves better-than-lossless training under 30\% pruning ratio, which  improves the Acc by 0.3\%, 0.7\%, and 0.6\% over full-data training on CIFAR-10, CIFAR-100, and ImageNet-1k, respectively.

This improvement could be attributed to the fact that $\mathcal{S}$ is more informative and less redundant than $\mathcal{T}$.
A detailed interpretation can be found in our appendix.
These results strongly validate the superiority of the unified objective and the effectiveness of the corresponding greedy solver.
Moreover, UGIES achieves lossless training on only 50\% data on ImageNet-1k with millions of samples, which demonstrates the generalization ability of our method on large modern datasets.
UGIES also generalizes well on Swin-T, achieving lossless acceleration by reducing 30\% of training samples.

\noindent\textbf{Efficiency Comparisons.}
Since our approach aims at achieving acceleration, we compare the time cost throughout the training process of several state-of-the-art methods in \cref{tab:time}.
Results are collected on a 4-RTX 4090 GPU server. Automatic Mixed Precision~\cite{micikevicius2018mixed} is adopted during training for all methods. `Training' denotes the wall clock time spent on network training procedure, including data loading, forward, and backward. `Overheads' is the additional time brought in by dataset pruning methods. As our method also need to build graph and compute distance within clusters before training, we report this time cost as `preprocess'.
The results show that our method reduces training time by more than 45\% while maintaining the same accuracy as full-set training. Moreover, its clear speed advantage over other state-of-the-art methods highlights the effectiveness of our unified importance evaluation scheme and the efficiency of the proposed greedy solver.
We attribute this to the principled objective and the theoretically grounded solver.

\subsection{Ablations and Analysis}
To examine how different importance designs affect pruning behavior, we conduct a series of ablation studies using multiple intrinsic and extrinsic formulations. These experiments evaluate the design principles enabled by our unified objective under varying pruning budgets. Unless otherwise stated, all results are reported on ImageNet-1k with ResNet-50.

\noindent\textbf{Ablation on intrinsic and extrinsic terms.}
We first examine how the two components of our unified importance affect performance in \cref{tab:ab_two_term}.
The intrinsic term dominates at low pruning ratios, while the extrinsic term becomes
increasingly beneficial under aggressive pruning.
When combined in UGIES, the two
terms complement each other and surpass either one alone across all pruning
regimes, highlighting the value of jointly modeling intrinsic and extrinsic terms.

\noindent\textbf{Different choices of intrinsic importance.}
We evaluate various intrinsic importance within UGIES in Tab.~\ref{tab:ab_in} and find that the framework achieves decent performance across all variants, showing robustness to this design choice.
Loss and entropy are generally the most stable, yet they differ in their strengths: entropy tends to work better at low pruning ratios, whereas loss is more effective under aggressive pruning.
This complementary behavior underscores the value of UGIES as a flexible framework that allows selecting the intrinsic metric best suited to the pruning regime.

\noindent\textbf{Effect of the intrinsic--extrinsic balance.}
Fig.~\ref{fig:ab-alpha} illustrates how the coefficient $\alpha$ influences the balance between intrinsic rewards and extrinsic diversity.
No single fixed value of $\alpha$ performs best across all pruning ratios: intrinsic-heavy settings work well at low pruning levels, whereas diversity-oriented configurations clearly dominate under aggressive pruning.
This ratio-dependent shift indicates that the optimal balance is not universal but varies with the pruning budget.
Such behavior highlights the necessity of a flexible framework—like UGIES—that can adapt the intrinsic-extrinsic tradeoff rather than relying on a fixed metric.

\noindent\textbf{Ablation on Extrinsic Importance.}
As discussed in Sec.~\ref{sec:theory}, once the mapping $g$ and distance $D$ satisfy the Lemma~\ref{lemma:submodular_final}, our greedy solver enjoys the approximation guarantee.
Among the many valid choices of $(g, D)$, the most crucial behavioral factor is the monotonicity of the extrinsic term with respect to $D$, which determines whether the model favors diversity (monotone-increasing) or similarity (monotone-decreasing).
Fig.~\ref{fig:ab-extrin} shows that promoting diversity yields higher accuracy, and the advantage becomes more pronounced at higher pruning ratios.
This suggests that, diversity-oriented formulations are generally more beneficial.
We also explore various $\mathcal{I}^{ex}(x_i|\mathcal{S})$ in Sec.~\ref{sec:ab_ex}

\noindent
\textbf{Ablation on the Greedy Solver.}
The theoretically guaranteed greedy solver for the MWCP is one of the key contributions of our approach. To verify its effectiveness, we compare it with the commonly used \textit{top-$k$} selection strategy. The results in Fig.~\ref{fig:ab-solver} show a clear performance gap between the two methods: our greedy solver consistently achieves superior results against Top-K across various pruning ratios, which highlights the advantage of the theoretically grounded optimization algorithm.
In addition, the  results of \textit{stochastic selection} variant suggest a decent trade-off between efficiency and accuracy. Since this strategy incurs less overheads, it could be a practical choice for very large datasets.
Please refer to Sec.~\ref{sec:stochastic} for details.

\noindent
\textbf{Comparison with Different Pruning Method Families.}
We further compare UGIES with representative intrinsic (InfoBatch), extrinsic (DivBS), and graph-based methods (D²-Pruning, InfoMax) across pruning ratios (Fig.~\ref{fig:ab_infomax}).
Intrinsic and extrinsic baselines each perform well only in their favorable regimes—InfoBatch at low pruning ratios and DivBS at high ratios—while both degrade sharply outside these regions.
Graph-based methods improve stability by incorporating relational cues, but their heuristic formulations cause inconsistent behavior as the pruning ratio increases.
Our UGIES achieves the most stable performance across all settings.
Because its importance scores arise directly from a principled intrinsic–extrinsic objective, rather than a fixed metric or heuristic message passing, the method naturally adjusts to different pruning budgets: intrinsic signals dominate when pruning is mild, while relational diversity becomes decisive under aggressive compression.
This unified objective explains why UGIES remains closest to or even higher than the full-data accuracy line under all ratios.

\section{Limitation and Future Work}
\label{sec:limitation}

Despite its effectiveness, UGIES still has several limitations. First, modeling pairwise sample interactions introduces additional computational and memory overhead, especially for extremely large-scale datasets. Although our structured graph sparsification substantially reduces this cost, further improving scalability through more efficient approximation or implicit interaction modeling remains an important future direction. Second, our approximation guarantee relies on mild conditions on the extrinsic importance metrics. These conditions cover a broad family of useful designs and provide practical guidance, but may require proper mappings for arbitrary metrics. Finally, our greedy solver provides an efficient approximation rather than an exact solution to the underlying combinatorial objective. Exploring tighter solvers or adaptive selection strategies may further improve pruning quality in more challenging settings.

\section{Conclusion}

We introduced UGIES, a unified graph-based framework for dataset pruning that jointly models intrinsic and extrinsic importance. By reformulating the pruning objective as the MWCP, we derived a principled marginal gain importance and an efficient greedy solver. Under mild and interpretable conditions, the objective becomes submodular and monotone, yielding a pruning procedure with formal performance guarantees. Extensive experiments demonstrate that UGIES consistently improves accuracy–efficiency trade-offs and provides a flexible foundation for selecting or designing importance metrics across models, datasets, and pruning ratios according to their specific demands.

\section*{Acknowledgments}
This work was supported by Ant Group Research Intern Program.
This work made use of computational resources provided by both the Ant Group and the HPC Platform of Huazhong University of Science and Technology.
We thank Wang Rui and Dr.~Wang Jin for their helpful discussions and suggestions.

\section*{Impact Statement}
This work aims to improve the efficiency of training modern deep learning models by reducing the size of training datasets through principled dataset pruning.
By identifying and retaining the most informative samples, the proposed framework can substantially reduce computational cost, training time, and energy consumption.
As large-scale models continue to grow in size and data requirements, such improvements contribute to lowering the environmental footprint of machine learning and improving the accessibility of training resource-intensive models to institutions with limited computational resources.
We believe that by offering a transparent, theoretically grounded formulation and explicit design conditions, our framework can facilitate more informed and responsible use of dataset pruning in practice.

\bibliography{main}
\bibliographystyle{icml2026}

\newpage
\appendix
\onecolumn

\section*{Appendix}
This appendix provides additional details and results to complement the main paper.

First, interpretations on the accuracy improvement of our method and other existing methods~(such as InfoBatch) are presented (Sec.~\ref{sec:interpre}). Next, we introduce the full definitions of the intrinsic importance functions used in our unified framework (Sec.~\ref{sec:app_in}). The details of the proposed statistic selection in the Tab.~\ref{fig:ab-solver} of the main text are provided (Sec.~\ref{sec:stochastic}). We then present extended experimental evaluations, including results on semantic segmentation, object detection, and several ablation studies on the design choices of extrinsic importance and feature sources (Sec.~\ref{sec:app_exp}). Next, we provide a detailed discussion on the simple method to maintaining the monotonicity of the objective function (Sec.~\ref{sec:app_mon}). We further give the complete proof of the $(1-1/e)$ greedy approximation guarantee used in our framework (Sec.~\ref{sec:app_proof}). The detailed complexity of the greedy selection is discussed (Sec.~\ref{sec:app_complexity}). Finally, we describe additional implementation details for reproduction of our experiments (Sec.~\ref{sec:app_settings}).

\section{Interpretation on Performance Improvement after Pruning}
\label{sec:interpre}

Interestingly, we observe that pruning a moderate portion of the training data (typically around 30\%--50\%) with our proposed method sometimes leads to performance that matches or even surpasses training on the full dataset.
This phenomenon can be naturally explained through the lens of our unified pruning objective.

Specifically, our objective aims to maximize the overall unified importance, which jointly accounts for a sample's intrinsic learning utility and its extrinsic redundancy with respect to other samples.
The proposed greedy selection strategy prioritizes samples with large marginal gains and progressively removes samples whose effective contribution is limited due to low intrinsic informativeness and strong redundancy.
Such samples provide little new information beyond what has already been captured by the current training model and may even interfere with optimization when their losses and gradients are averaged within mini-batches.
As a result, the retained subset exhibits higher information density and reduced redundancy, enabling the model to focus on more informative and representative training signals while preserving generalization ability.
Furthermore, some pruned samples may implicitly contain noise or spurious correlations, whose removal further stabilizes training and improves performance.
Consequently, when the pruning ratio remains within a moderate range, dataset pruning can lead to lossless or even improved performance.

This behavior has also been observed in prior works such as InfoBatch~\cite{infobatch}, indicating that the phenomenon is not unique to our method but rather intrinsic to importance-based pruning strategies.
Nevertheless, similar to existing approaches, our method does not achieve lossless acceleration under overly aggressive pruning.
When the pruning ratio becomes too large, samples with high intrinsic importance are inevitably discarded, leading to insufficient coverage of the data distribution and a subsequent drop in performance.

\section{Different Intrinsic Importance}
\label{sec:app_in}
This section provides the detailed definitions of the intrinsic importance functions used in Tab.~5 of the main text.

\noindent\textbf{Entropy.}
We use prediction entropy $H(\mathbf{p}_i)$ as a standard uncertainty measure:
\begin{equation}
\mathcal{I}^{in}_{entropy}(x_i)
= H(\mathbf{p}_i) =
- \sum_{c=1}^{K} p_i^c \log p_i^c,
\label{eq:app_imp_entropy}
\end{equation}
where $K$ is the number of classes and $\mathbf{p}_i$ is the final prediction of $x_i$.

\noindent\textbf{Loss.}
Loss reflects the learning difficulty of each sample.
Since most experiments are conducted on image classification,
we use cross-entropy loss:
\begin{equation}
\mathcal{I}^{in}_{loss\_cls}(x_i)
=
\mathrm{CELoss}(\mathbf{p}_i, \mathbf{y}_i),
\label{eq:app_imp_loss}
\end{equation}
where $\mathbf{y}_i$ is the ground truth label.
For other tasks, the same definition can be instantiated using the corresponding objective (e.g., MSE or KL divergence).

\noindent\textbf{Gradient Norm.}
We also explore the gradient norm of the final layer as a representative of gradient-based criteria:
\begin{equation}
\mathcal{I}^{in}_{grad}(x_i)
=
\lVert \mathrm{Grad}(\mathbf{W}_{final}) \rVert_2,
\label{eq:app_imp_grad}
\end{equation}
where $\mathrm{Grad}(\cdot)$ denotes the gradient of the referenced tensor, and $\mathbf{W}_{final}$ denotes the learnable weights of the final layer.

\noindent\textbf{Entropy and Loss Variation.}
We further consider the epoch-wise variations:
\begin{equation}
\begin{aligned}
\mathcal{I}^{in}_{va\_grad}(x_i)
&=
\bigl|
\mathcal{I}^{in}_{grad}(x_i)^{t-1}
-
\mathcal{I}^{in}_{grad}(x_i)^{t}
\bigr|, \\
\mathcal{I}^{in}_{va\_loss}(x_i)
&=
\bigl|
\mathcal{I}^{in}_{loss}(x_i)^{t-1}
-
\mathcal{I}^{in}_{loss}(x_i)^{t}
\bigr|,
\end{aligned}
\label{eq:app_imp_variation}
\end{equation}
where $t$ and $t{-}1$ denote consecutive epochs.
These quantities capture relative learning progress and help prioritize still-evolving or recently forgotten samples.

\noindent\textbf{Loss $\times$ Entropy.}
This composite score highlights samples that are simultaneously hard and uncertain:
\begin{equation}
\mathcal{I}^{in}_{loss\times entropy}(x_i)
=
\mathcal{I}^{in}_{loss}(x_i)
\cdot
\mathcal{I}^{in}_{grad}(x_i).
\label{eq:app_imp_loss_grad}
\end{equation}

\section{ Faster Pruning Implementation: Stochastic Selection via Importance Sampling.}
\label{sec:stochastic}
In addition to the deterministic greedy rule, the unified importance scores
$\mathcal{I}(x_i|\mathcal{T}\backslash\{x_i\})$, which is equivalent to the marginal loss $\Delta^{-}(v_i|G)$, can be used to construct a stochastic sampling distribution.
\begin{equation}
\mathcal{I}(x_i|\mathcal{T}\backslash\{x_i\}) = \Delta^{-}(v_i|G)
\end{equation}
Specifically, we convert the scores into a probability measure over the dataset:
\begin{equation}
\label{eq:soft_sampling_prob}
\pi(x_i)
=
\frac{\exp\big(\mathcal{I}(x_i)\big)}
{\sum_{x_j\in\mathcal{T}} \exp\big(\mathcal{I}(x_j)\big)}
\end{equation}
A pruned subset $\mathcal{S}$ is then obtained by sampling $(1-p)N$ elements
\emph{without replacement} according to $\pi(\cdot)$.
This stochastic variant eliminates the iterative recomputation in greedy selection
and reduces the selection complexity to $O(N)$ after importance computation,
making it suitable for extremely large-scale settings.

\section{More Experimental Results}
\label{sec:app_exp}
Additional experiments are provided to further illustrate the behavior and generalization of our framework.

\subsection{Performance on Semantic Segmentation}
To assess performance on dense prediction tasks, we evaluate our method on the PASCAL VOC 2012 \texttt{trainaug} set~\cite{everingham2010pascal} following the DivBS configuration.
Since DivBS performs batch-wise pruning, we adapt our UGIES accordingly and denote this variant as \textbf{Ours-BS}.
We similarly adapt InfoBatch for fairness (denoted as \textbf{InfoBatch-BS}).
We evaluate the performance under pruning ratios of 70\%, 80\%, and 90\%, and report final mIoU.

As shown in Tab.~\ref{tab:app_seg}, Ours-BS consistently achieves the best performance across all pruning rates, demonstrating the strong generalization capability of our unified importance scoring on segmentation tasks.

\begin{table}[h!]
\small
\centering
\caption{Comparison on PASCAL VOC 2012 using ResNet-50 UperNet. $\dagger$ denotes our reproduction.}
\begin{tabular}{c|ccc}
\toprule
\bf Methods & 70\% & 80\% & 90\% \\
\midrule
Moderate\cite{moderate}  & 68.34  &  66.83  &  63.27  \\
CCS \cite{silverman} &  68.47  &  67.22  &  63.98  \\
DivBS \cite{DivBS} & 69.85  & 68.13 & 65.45  \\
InfoBatch-BS\textsuperscript{$\dagger$} \cite{infobatch} & 69.80  & 67.97  & 64.66  \\
Ours-BS & 70.02  & 68.48 & 65.79 \\
\midrule
Full Data & \multicolumn{3}{c}{70.80} \\
\bottomrule
\end{tabular}
\label{tab:app_seg}
\end{table}

\subsection{Performance on Object Detection}
We further evaluate generalization on object detection using SSD-ResNet50 trained on COCO2017~\cite{coco2017}.
We compare with InfoBatch~\cite{infobatch} and $\mathcal{D}^2$-pruning~\cite{d2p}.
Tab.~\ref{tab:detection} shows that our method attains the highest AP across all metrics, indicating strong adaptability to various vision tasks.

\begin{table}[h!]
\centering
\small
\caption{Object detection pruning on COCO2017 using SSD ResNet-50. $\dagger$ denotes our reproduction. The proposed method achieves a large performance improvement over existing methods.}

\begin{tabular}{l c c c c c c c}
\toprule
\multirow{2}{*}{\bf Method} & \bf Pruning & \multirow{2}{*}{$\mathbf{AP}$} & \multirow{2}{*}{$\mathbf{AP}_{50}$} & \multirow{2}{*}{$\mathbf{AP}_{75}$} & \multirow{2}{*}{$\mathbf{AP}_{S}$} & \multirow{2}{*}{$\mathbf{AP}_{M}$} & \multirow{2}{*}{$\mathbf{AP}_{L}$} \\
& \bf Ratio & \\
\midrule
Full Data & 0\% & 25.2 & 42.7 & 25.8 & 7.3 & 27.1 & 40.8 \\
\midrule
Random\textsuperscript{$\dagger$} & 25\% & 23.5 & 40.4 & 23.8 & 6.1 & 24.9 & 38.9 \\
InfoBatch\textsuperscript{$\dagger$}~\cite{infobatch} & 25\% & 25.0 & 42.3 & 26.0 & 7.6 & 26.9 & 40.7 \\
$\mathcal{D}^2$-pruning\textsuperscript{$\dagger$}~\cite{d2p} & 25\% & 24.8 & 41.7 & 25.3 & 7.0 & 26.1 & 41.0 \\
\bf Ours & 25\% & \bf 25.5 & \bf 42.7 & \bf 26.4 & \bf 7.8 & \bf 27.8 & \bf 41.3 \\
\bottomrule
\end{tabular}

\label{tab:detection}
\end{table}

\subsection{Ablation on Extrinsic Importance}
\label{sec:ab_ex}
As discussed in Sec.~3.5, our mild conditions allow a broad family of valid mapping functions $g$ and distance metrics $D$.
We therefore provide some choices of mapping functions:
\begin{equation}
\begin{aligned}
g_1(d) &= \phi(d) - 1,\\
g_2(d) &= -\frac{1}{d+\epsilon},\\
g_3(d) &= -e^{-d},\\
g_4(d) &= -\frac{1}{1+\log(1+d)}.
\end{aligned}
\label{eq:g_daoshu}
\end{equation}
We also compare multiple distance metrics (Cosine, L2, L1, L$\infty$).
Tab.~\ref{tab:app_ex} shows that our method is generally not sensitive to the specific choice of $g$, while cosine distance yields the strongest performance among all $D$.
This aligns with the common observation that cosine similarity performs well in high-dimensional feature spaces. Chebyshev distance performs the weakest, likely due to sensitivity to outliers.
We also modify other powerful extrinsic pruning method as a option of the extrinsic term in our framework as well.
We implement and adapt the gradient similarity of DivBS~\cite{DivBS} and the probability density of PFB~\cite{pfb} to explore metrics beyond a simple distance metric based on features.
The results demonstrate that the modified DivBS~(without the greedy search) outperforms the others on both low and high prune ratios.
However, as the gradient calculation cost extra time, we recommend using cosine distance for the better trade-off between accuracy and efficiency.
Please note that we adjusted the $\alpha$ for each setting and report the optimal performance.

\begin{table}[h!]
\small
\centering
\caption{Ablations on different choices of $g(\cdot)$ and $D(\cdot,\cdot)$ under 30\% pruning on ImageNet. We also adapt the gradient similarity of DivBS and probability density based on interactions to cluster centers of PFB in our framework as different choice of extrinsic importance.}
\begin{tabular}{l|c|cc}
\toprule
\bf Mapping function $g$ & \bf Metric $D$ &  30\% & 70\% \\
\midrule
$g_1(d)=\phi(d) - 1$ & \multirow{4}{*}{Cosine Distance}  & 77.0 \textcolor{green}{$\uparrow 0.6$} & 75.3 \textcolor{red}{$\downarrow 1.1$}\\
$g_2(d)= -\frac{1}{d+\epsilon}$ &   & 76.9 \textcolor{green}{$\uparrow 0.5$} & 74.9 \textcolor{red}{$\downarrow 1.5$}\\
$g_3(d)= -e^{-d}$ &   & 77.0 \textcolor{green}{$\uparrow 0.6$} & 75.2 \textcolor{red}{$\downarrow 1.2$}\\
$g_4(d)= -\frac{1}{1+\log(1+d)}$ &   & 76.9 \textcolor{green}{$\uparrow 0.5$} & 75.3 \textcolor{red}{$\downarrow 1.1$}\\
\midrule
\multirow{7}{*}{$g_1(d)=\phi(d) - 1$} & \makecell{Gradient Similarity \\(DivBS w/o greedy)} & 77.0 \textcolor{green}{$\uparrow 0.6$} & 75.5 \textcolor{red}{$\downarrow 0.9$}\\
\cmidrule{2-4}
& \makecell{ Probability Density \\(PFB w/o random term)} & 76.9 \textcolor{green}{$\uparrow 0.5$} & 75.3 \textcolor{red}{$\downarrow 1.1$}\\
\cmidrule{2-4}
 & L2(Euclidean) & 76.8 \textcolor{green}{$\uparrow 0.4$} & 75.2 \textcolor{red}{$\downarrow 1.2$}\\
 & L1(Manhattan) & 76.8 \textcolor{green}{$\uparrow 0.4$} & 75.1 \textcolor{red}{$\downarrow 1.3$}\\
 & L$\infty$(Chebyshev) & 76.6 \textcolor{green}{$\uparrow 0.2$} & 74.6 \textcolor{red}{$\downarrow 1.8$}\\
\midrule
\multicolumn{2}{c}{Full Data} & \multicolumn{2}{c}{76.4\textsubscript{$\pm$0.1}} \\
\bottomrule
\end{tabular}
\smallskip
\label{tab:app_ex}
\end{table}

\subsection{Ablation on Feature Source for Distance Computation and Graph Construction}
Since our extrinsic importance computation depends on sample embeddings, we study the influence of different models to extract features:
(i) ImageNet-pretrained ResNet-34 (TorchVision),
(ii) ImageNet-pretrained ResNet-50 (TorchVision), and
(iii) the current training model.
Tab.~\ref{tab:app_feat} shows that all three variants achieve similar accuracy, while the dynamic features (iii) incur better accuracy but also higher overhead due to per-epoch feature extraction and graph rebuilding.
Thus, for efficiency, we simply use the setting (ii) for all of our ResNet-50 experiments.
Switching to ResNet-34 slightly decreases performance but remains competitive, indicating robustness to the embedding extractor.

\begin{table}[h!]
\small
\centering
\caption{Ablations on feature sources under 30\% pruning on ImageNet.}
\begin{tabular}{cc|cc}
\toprule
\bf Feature Source & \bf  Architecture & \bf  Top-1 Acc. \\
\midrule
TorchVision Pretrained & ResNet-34 & 76.82 \textsubscript{$\pm$0.23} \\
TorchVision Pretrained & ResNet-50 & 76.96 \textsubscript{$\pm$0.11} \\
Ours During Training   & ResNet-50 & 77.02 \textsubscript{$\pm$0.13} \\
\bottomrule
\end{tabular}
\label{tab:app_feat}
\end{table}

\section{Maintain Monotonicity of the Objective Function}
\label{sec:app_mon}
As discussed in Sec.~3.5, we provide a simple approach to ensure $f(\mathcal{S})$ remains monotone.
Adding a constant to intrinsic importance,
\(
\mathcal{I}^{in}_{revised}(x_i)=\mathcal{I}^{in}(x_i)+\sum_{j=1}^{|\hat{\mathcal{S}}|}\eta
\),
yields the marginal gain:
\begin{equation}
\begin{aligned}
\Delta(x_i\mid \hat{\mathcal{S}})
&=
\alpha\,\mathcal{I}^{in}(x_i)
+
\alpha\!\sum_{j=1}^{|\hat{\mathcal{S}}|}\eta
+
\sum_{x_j\in\hat{\mathcal{S}}} g(D(x_i,x_j)).
\end{aligned}
\end{equation}
Since all intrinsic terms are non-negative, monotonicity is ensured when $\eta$ is large enough:
\begin{equation}
\eta \ge \frac{1}{\alpha}\max_{x_i,x_j} |g(D(x_i,x_j))|.
\end{equation}
This constant does not affect greedy decisions (as it shifts all samples equally), so it is omitted in the main text for clarity.

\section{Proof of the Greedy Approximation Guarantee}
\label{sec:app_proof}
In this section, we provide a simple proof of the $(1-1/e)$
approximation guarantee for the greedy algorithm applied to our unified
objective~(Theorem 1). The proof directly handles the general case when $f(\varnothing)=0$.
For a more detailed and rigorous derivation, we refer the reader to the original work of Nemhauser et al.~\cite{nemhauser1978analysis}.

\begin{proof}
We may, without loss of generality, assume $f(\varnothing)=0$, since adding a constant
to $f$ does not change marginal gains nor the greedy solution.
Let $\mathcal{S}_t$ be the greedy set after $t$ steps, with $\mathcal{S}_0=\varnothing$
and $|\mathcal{S}_t|=t$. By submodularity, for any $t$,
\begin{equation}
\begin{aligned}
\label{eq:proof_1}
f(\mathcal{S}^\ast)
\;\le &
f(\mathcal{S}_t)
+
\sum_{x\in\mathcal{S}^\ast\setminus\mathcal{S}_t}
\Delta(x\mid\mathcal{S}_t) \\
\le &
f(\mathcal{S}_t)
+
(b-t)\max_{x\in\mathcal{T}\setminus\mathcal{S}_t}
\Delta(x\mid\mathcal{S}_t).
\end{aligned}
\end{equation}
By the greedy rule, we can get the maximum of the marginal gains can be expressed as :
\begin{equation}
\begin{aligned}
\max_{x} \Delta(x\mid\mathcal{S}_t)
=
f(\mathcal{S}_{t+1})-f(\mathcal{S}_t),
\end{aligned}
\end{equation}
where $\mathcal{S}_{t+1}$ is the next greedy set.
Hence, the following inequality can be derived from Eq.~\ref{eq:proof_1}:
\begin{equation}
f(\mathcal{S}^\ast) - f(\mathcal{S}_t)
\;\le\;
(b-t)\bigl(f(\mathcal{S}_{t+1})-f(\mathcal{S}_t)\bigr),
\end{equation}
Next, rewrite it at iteration $t{+}1$ as
\begin{equation}
\begin{aligned}
f(\mathcal{S}^\ast) - f(\mathcal{S}_{t+1})
&=
\bigl(f(\mathcal{S}^\ast) - f(\mathcal{S}_{t})\bigr)
-
\bigl(f(\mathcal{S}_{t+1}) - f(\mathcal{S}_t)\bigr).
\end{aligned}
\end{equation}
Substituting the lower bound above yields
\begin{equation}
\begin{aligned}
f(\mathcal{S}^\ast) - f(\mathcal{S}_{t+1})
&\le
\bigl(f(\mathcal{S}^\ast) - f(\mathcal{S}_{t})\bigr)
-
\frac{f(\mathcal{S}^\ast) - f(\mathcal{S}_t)}{b-t} \\
&=
\Bigl(1-\frac{1}{b-t}\Bigr)
\bigl(f(\mathcal{S}^\ast)-f(\mathcal{S}_t)\bigr).
\end{aligned}
\end{equation}
For concice, let $\Delta_t = f(\mathcal{S}^\ast) - f(\mathcal{S}_t)$ denote the optimality gap after $t$ greedy steps, we get the recursive inequality:
\begin{equation}
\Delta_{t+1}
\;\le\;
\Bigl(1-\frac{1}{\,b-t\,}\Bigr)\Delta_t,
\qquad t=0,1,\dots,b-1.
\label{eq:app_recursion}
\end{equation}
Applying Eq.~\ref{eq:app_recursion} iteratively gives
\begin{equation}
\Delta_b
\;\le\;
\prod_{t=0}^{b-1}
\Bigl(1-\frac{1}{\,b-t\,}\Bigr)\Delta_0
=
\prod_{k=1}^{b}
\Bigl(1-\frac{1}{k}\Bigr)\Delta_0,
\label{eq:app_product}
\end{equation}
where we substituted $k=b-t$.
Each factor satisfies
\(
1-\frac{1}{k} \le 1-\frac{1}{b}
\),
hence
\begin{equation}
\prod_{k=1}^{b}\Bigl(1-\frac{1}{k}\Bigr)
\;\le\;
\Bigl(1-\frac{1}{b}\Bigr)^b.
\label{eq:app_bound1}
\end{equation}

Using the inequality $\ln(1-x)\le -x$ for $x\in(0,1)$, we further obtain
\begin{equation}
\Bigl(1-\frac{1}{b}\Bigr)^b
=
e^{b\ln(1-\tfrac{1}{b})}
\;\le\;
e^{b\cdot(-\frac{1}{b})}
=
\frac{1}{e}.
\end{equation}
Combining with Eq.~\ref{eq:app_product}--\ref{eq:app_bound1}, we can conclude that
\begin{equation}
f(\mathcal{S}^\ast)-f(\mathcal{S}_{\mathrm{greedy}})
=\Delta_b
\;\le\;
\frac{1}{e}\,f(\mathcal{S}^\ast),
\end{equation}
or equivalently,
\begin{equation}
f(\mathcal{S}_{\mathrm{greedy}})
\;\ge\;
(1-1/e)\, f(\mathcal{S}^\ast).
\end{equation}
\end{proof}

\section{Complexity Analysis of Greedy Selection}
\label{sec:app_complexity}

To better position our greedy solver within the broader combinatorial optimization literature, we briefly compare its complexity with those of representative exact and local-search approaches designed for the MWCP below.

\subsection{Our Greedy Selection}
Let $b=(1-p)N$ denote the number of selected samples, and let $\bar m$ denote the average neighborhood size in the sparsified graph. We maintain current scores with a max-heap and update only the neighbors of each newly selected sample using reverse adjacency lists.

The greedy stage has two parts in our practical implementation: (i) heap initialization (to speed up search), which costs $O(N\log N)$, and (ii) $b$ iterative updates. In each iteration, we extract the current best sample from the heap costs $O(\log N)$, and update the affected neighbors costs $O(\bar m\log N)$, since only the neighbors of the newly selected sample need score updates. Therefore, each iteration costs: $$ O(N\log N)+O\big(b(1+\bar m) \log N\big). $$

Under our graph sparsification scheme, each sample only interacts with samples in its local cluster, so $\bar m \approx \frac{N}{K} \gg1$, where $K$ is the number of clusters. This gives the following approximation of the above equation: $$ O\big(N\log N+b(1+\bar m) \log N\big) \approx O(N\log N), $$ which is close to sorting-like complexity in practice and substantially cheaper than recomputing pairwise gains over all remaining samples at every step.

\subsection{Exact Branch-and-bound Solver}

We also provide the complexity analysis of an exact branch-and-bound solver~\cite{babel1994fast}, which uses weighted coloring to compute upper bounds and guide branching. In general, its runtime can be written as $O(BN^2)$, where $B$ is the number of visited branch-and-bound nodes and the $O(N^2)$ term reflects repeated weighted-coloring and pruning operations at each node. Since $B$ is exponential in the worst case, the overall complexity remains exponential.

In our setting, each subgraph is fully connected, so clique feasibility no longer provides strong pruning, and the problem reduces to selecting a fixed-size subset of size $b=(1-p)N$. This solver therefore essentially searches over size-$b$ subsets. By Stirling’s approximation, $$ \binom{N}{(1-p)N}\approx e^{N H(1-p)}, $$ where $H(q)=-q\ln q-(1-q)\ln(1-q)$ is the binary entropy function. This is the standard asymptotic approximation for binomial coefficients when the subset size is a fixed fraction of $N$, and therefore gives a natural complexity estimate here. Hence, the worst-case complexity is on the order of $O\left(e^{N H(1-p)}N^2\right)$.

Even if the same sparsification is applied so that each cluster has size about $N/K$, the complexity only reduces to approximately $$ O\left(\frac{N^2}{K} e^{\frac{N}{K}H(1-p)}\right) \approx O\!\left(e^{NH(1-p)} N^2\right), $$ which still remains exponential.
This exponential complexity is unacceptable for large datasets with millions samples.

\subsection{Heuristic Local Search Solver}
We also analysis the complexity of a heuristic phased local search method~\cite{pullan2008approximating}, which repeatedly $T$ times of add/swap/perturb op. on the current clique. In our fixed-size setting, a natural budget is $T=O(b)$, which gives an overall complexity of about $$O(Nb^2)=O\left((1-p)^2N^3\right),$$ which is roughly $O(N^3)$.

\subsection{Summary}
For clarity, we compare the complexity of these methods in Tab.~\ref{tab:complexity-comparison}.
This complexity comparisons suggest that exact branch-and-bound methods remain exponentially expensive, and local-search heuristics also introduce substantially higher-order cost. By contrast, our method remains scalable while preserving the unified objective framework.

\begin{table}[h]
\centering
\caption{Comparison of methods and their complexity.}
\label{tab:complexity-comparison}
\renewcommand{\arraystretch}{1.3}
\begin{tabular}{lll}
\toprule
\textbf{Method} & \textbf{Type} & \textbf{Complexity in our setting} \\
\midrule
Ours~(Algorithm~\ref{alg:graph_pruning}) & Greedy & $O(N \log N)$ \\
\cite{babel1994fast} & Branch-and-bound & $O\!\left(e^{NH(1-p)} N^2\right)$ \\
\cite{pullan2008approximating} & Local-search & $O(N^3)$ under $T = O(b)$ \\
\bottomrule
\end{tabular}

\end{table}

\section{More Implementation Details}
\label{sec:app_settings}
Most classification experiments are conducted on an 8$\times$A800 GPU server.
Runtime experiments in Tab.~3 of the main text are additionally conducted on a 4$\times$4090 server to reflect realistic training scenarios.
Please note that the running time is also related to the CPUs to perform graph sparsification and greedy selection.
Our server is equipped with two Intel Xeon Platinum 8468V CPUs.
For CIFAR-10/100 and ImageNet-1k, we follow InfoBatch~\cite{infobatch} training settings; for Swin-T we follow Dyn-Unc~\cite{dynunc}.
AutoAugment~\cite{cubuk2018autoaugment}, random path drop, and gradient clipping are applied only to Swin-T to ensure a fair comparison with Dyn-Unc.
The hyperparameters required for reproduction are listed in the Tab.~\ref{tab:appendix_details}. We use $a/b/c$ to denote different choice for 30\%, 50\%, 70\% pruning ratios.
For the fairness of comparison, the reported performance in Tab.~\ref{tab:cifar} and Tab.~\ref{tab:imagenet} are implemented with entropy as the intrinsic term, $g_1$ and cosine distance for extrinsic term.
\textit{However, please note that on large prune ratios, such as 70\%, using loss values as intrinsic term usually leads to a better performance.} This phenomenon is also described in the Tab.~\ref{tab:ab_in}. We believe for large prune ratios, loss values can identify those samples which can offer more information model still have not mastered.

\begin{table*}[h!]
\centering
\caption{Detailed training settings on classification datasets.}
\begin{tabular}{cl|c|c|cc}
\toprule
& \textbf{Parameters} & \bf CIFAR-10 & \bf CIFAR-100 & \multicolumn{2}{c}{\bf ImageNet-1k} \\
\midrule
& \bf Models & ResNet-18 & ResNet-18 & ResNet-50 & Swin-T \\
\midrule
\multirow{8}{*}{\bf \rotatebox[origin=c]{90}{Training}}
& optimizer & SGD & SGD & Lars & AdamW \\
& weight\_decay & 0.0005 & 0.0005 & 0.00005 & 0.05 \\
& batch\_size & 128 & 128 & 1024 & 1024 \\
& epochs & 200 & 200 & 90 & 300 \\
& learning\_rate & 0.10 & 0.05 & 6.4/6.4/3.6 & 0.001 \\
& label smoothing & 0.1 & 0.1 & 0.1 & 0.1 \\
& learning rate scheduler & OneCycle & OneCycle & OneCycle & CosineAnnealing \\
& learning rate warmup & - & - & 5 & 20 \\
\midrule
\multirow{4}{*}{\bf \rotatebox[origin=c]{90}{Pruning}}
& $\alpha$ & 1.0 & 1.0 & 1.0/0.8/0.2 & 1.0 \\
& epoch start pruning & 5 & 5 & 2 & 2 \\
& epoch stop pruning & 180 & 180 & 80 & 265 \\
& feature for graph building & Well Trained & Well Trained & Torch Pretrained & Torch Pretrained \\
\bottomrule
\end{tabular}
\label{tab:appendix_details}
\end{table*}

\end{document}